\definecolor{darkgreen}{rgb}{0,0.5,0} %
\definecolor{darkblue}{rgb}{1,0,0} 
\theoremstyle{plain}
\title{Mode Collapse of Mean-Field Variational Inference}
\date{}
\author{  
 Shunan Sheng%
  \thanks{
  Columbia University, Department of Statistics, ss6574@columbia.edu.}  \and
  Bohan Wu%
  \thanks{ Columbia University, Department of Statistics, bw2766@columbia.edu.} 
  \and
  Alberto Gonz{\'a}lez-Sanz%
  \thanks{Department of Statistics, Columbia University, ag4855@columbia.edu} }  
\begin{document}
\maketitle
\begin{abstract}
Mean-field variational inference (MFVI) is a widely used method for approximating high-dimensional probability distributions by product measures.  It has been empirically observed that  MFVI optimizers often suffer from \emph{mode collapse}. Specifically, when the target measure \( \pi \) is a mixture \( \pi = w P_0 + (1 - w) P_1 \), the MFVI optimizer tends to place most of its mass near a single component of the mixture.  This work provides the first \emph{theoretical} explanation of mode collapse in MFVI.
We introduce the notion to capture the \emph{separatedness} of the two mixture components---called \emph{$\vae$-separateness}---and derive explicit bounds on the fraction of mass that any MFVI optimizer assigns to each component when $P_0$ and $P_1$ are $\vae$-separated for sufficiently small $\vae$.
Our results suggest that the occurrence of mode collapse crucially depends on the {relative position of the components}.  To address this issue, we propose the \emph{rotational variational inference (RoVI)}, which augments MFVI with a rotation matrix. The numerical studies support our theoretical findings and demonstrate the benefits of RoVI. 
\end{abstract}

{\small
\noindent \emph{Keywords}
Mean-Field Variational Inference; Mode Collapse; Rotational Variational Inference; 

\noindent \emph{AMS 2020 Subject Classification}
49Q22; 
62F15 
}

\section{Introduction}
Given a probability measure $\pi$ on $\R^d$ with potential $V: \R^d \mapsto \R\cup \{+\infty\}$, i.e.,
\begin{equation*}
\pi(\rd x)  = Z^{-1}e^{-V(x)}\dd x 
\end{equation*}
where $Z$ denotes the normalizing constant.  The problem of \emph{sampling} is to draw Monte Carlo samples from $\pi$ where the normalizing constant is unknown. 

Variational inference (VI) is a widely used approximation technique for sampling from high-dimensional probability distributions \citep{Blei2017}. It has been applied in numerous large-scale Bayesian inference problems, including topic modeling \citep{Blei2003}, deep generative modeling \citep{KingmaWelling2019,Lopez2018}, robust Bayes analysis \citep{Wang2018robustBayesianmodeling}, marketing science \citep{Braun2010}, and genome sequence modeling \citep{Carbonetto2012,Lopez2018,Wang2020EBVI-VS}. In these applications, obtaining exact posterior samples is often computationally infeasible, whereas a well-calibrated variational approximation is sufficient for practical use \citep{Blei2017}.

The mean-field variational inference (MFVI) problem is to find the best approximation of $\pi$ within the class of product measures $ \cP(\R)^{\otimes d}$ under the Kullback--Leibler~(KL) divergence, 
\begin{equation}\label{eq: MFVI}
   \mu^* \in \argmin_{\mu \in \cP(\R)^{\otimes d}} \kl{\mu}{\pi}. \tag{MFVI}
\end{equation}

When the target distribution $\pi$ is log-concave, MFVI enjoys several computational and theoretical guarantees~\citep{arnese2024convergence, jiang2025algorithms, Wang2019,Lacker2024,lavenant2024convergence,wu2024extending}. In particular, \cite{Sheng2025Stability} establishes quantitative stability results for MFVI optimizers. However, the stability of MFVI in the non–log-concave setting remains an open question, with both empirical and theoretical evidence suggesting potential instability \citep{Ghorbani2019,soletskyi2024theoretical,blessing2024beyond}. A well-documented observation is the mode-collapsing phenomenon, often caused by the label-switching problem in statistics \citep{Stephens2000}, where the solution of variational inference problem ignores all but one mode of a mixture distribution \citep{Wang2006,Blei2017,pati2018statistical}.  Our preliminary experiment in \Cref{fig:mode-collapsing-Gaussian} confirms that MFVI suffers from mode collapse: while the target measure has two modes, the algorithm only captures one of the modes depending on the initialization --- suggesting unstable performance of MFVI. 

To analyze mode collapse, we introduce a MFVI-oriented notion of separateness for two-component mixture distrubutions --- called \emph{$\vae$-separated} (see \cref{defn:epsilon-separated}) --- formulated via orthogonal half-spaces. The proposed notion parallels the $c$-separateness introduced in~\cite{Dasgupta1999}, which quantifies overlap between Gaussian components. The present work shows that when components are $\vae$-separated with sufficiently small $\vae$, MFVI is particularly prone to mode collapse.%

The focus on mixture distributions is canonical in the study of non-log-concave sampling, as one cannot expect a unified theory for approximating non–log-concave targets. Understanding how MFVI behaves in this setting could provide insights into the behavior of variational inference for complex multimodal targets.
Along these lines, most prior work on learning mixture distributions has focused on Gaussian mixtures~\cite{Wu2020GM, Yan2023, Jin2016}. For example, \cite{Dasgupta1999,Arora2005} analyze the problem of learning mixtures of separated Gaussian components in polynomial time, and \cite{Ge2015} establishes the sample and iterative complexity for learning Gaussian mixture distributions in general dimensions. 

In this work, we inaugurate a rigorous theoretical study of the mode-collapsing behavior of MFVI, investigating the case where the target measure is any two-component mixture distribution. In \Cref{thm:MFVI-concentration}, we show that the MFVI optimizer concentrates around a single component when the two mixture components of $\pi$ are approximately separated by two half-spaces whose boundaries are orthogonal to some coordinate axes. As a consequence, we prove that MFVI is not stable as we vary the mixing weight when the separation is exact. As a remedy, \emph{rotational variational inference (RoVI)} is proposed to fit the two-component mixture distribution, which optimizes over the set of product measures augmented by rotations. We derive an upper bound of the minimum of the KL divergence in RoVI when the target measure is a two-component Gaussian mixture with the same covariance matrix. We validate the effectiveness of RoVI via empirical experiments. A concurrent work \cite{chen2025rotated} independently proposes rotational variational inference with the goal of moving the target measure toward Gaussian. In contrast to our joint optimization over the rotation and the MFVI approximation, they instead fix a rotation matrix depending on $\pi$ and find the mean-field approximation to the rotated target measure.

\begin{figure}
    \centering
    \includegraphics[width=\linewidth]{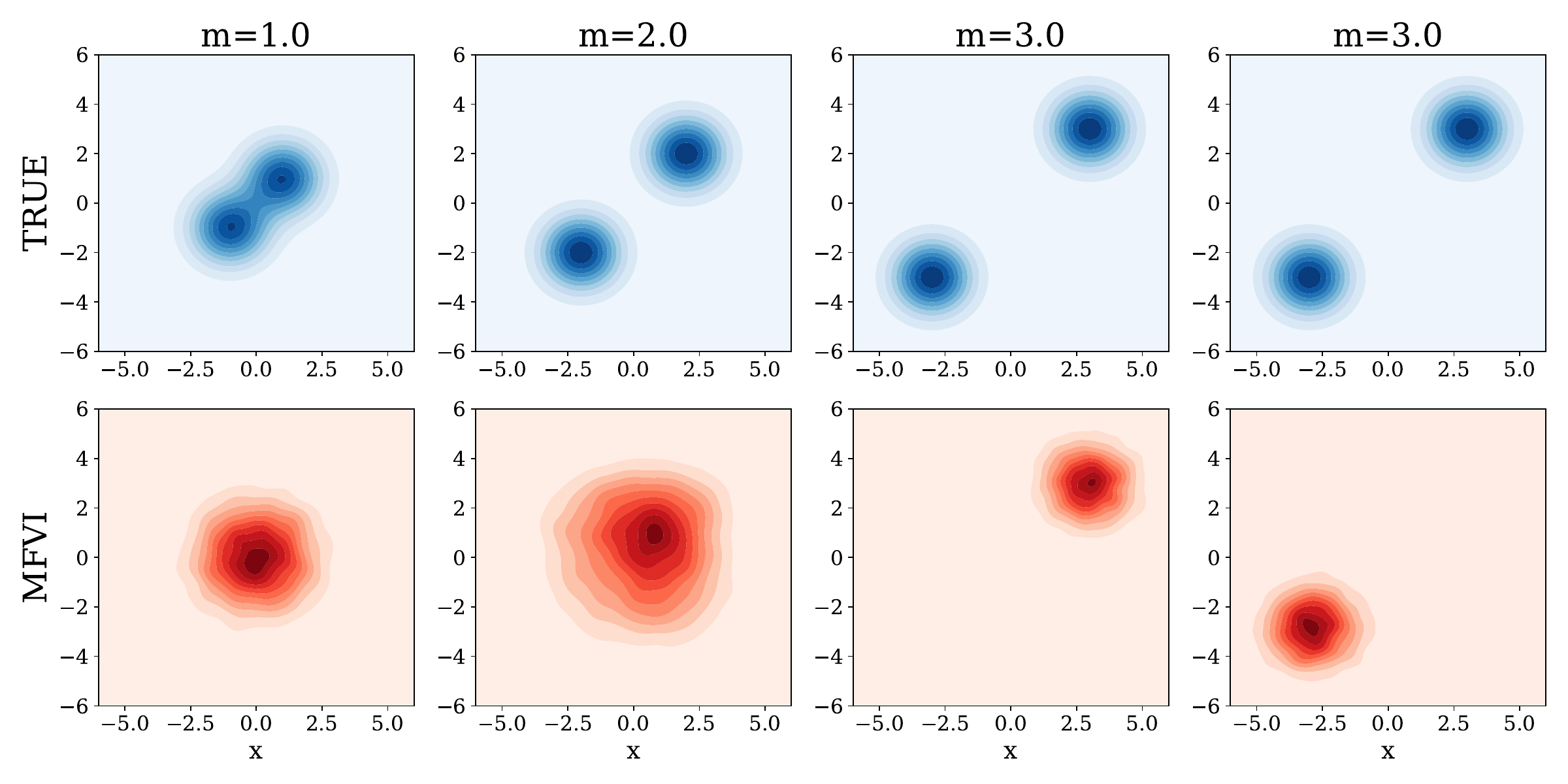}
    \caption{
    $\pi = \frac{1}{2}\,\cN\left([-m,-m]^\top, I_2
    \right) + \frac{1}{2}\,\cN\left([m,m]^\top, I_2
    \right)$. The above figure shows the coutour plots of $\pi$ and a MFVI optimizer when $m \in \{1,2,3\}$. We observe the mode-collapse is prominent when $m=3$. Moreover, when $m=3$, the MFVI optimizer can switch from one mode to the other depending on the initialization.  }
    \label{fig:mode-collapsing-Gaussian}
\end{figure}

\textbf{Organization. } The rest of the paper is organized as follows. \Cref{sec:mode-collapse} states the theoretical results for explaining the mode-collapse in MFVI. In \Cref{sec:rotation-vi}, we propose the rotational variational inference algorithm (RoVI) and present numerical studies. \Cref{sect-discussion} concludes the paper with a discussion on future directions. The proofs of our main results are included in the Appendices.
\subsection{Notations}
For any $d \in \mathbb{Z}_+ = \{1, 2, \dots\}$, we denote $[d] = \{1, 2, \dots, d\}$ and let $I_d$ be the identity matrix of size $d$. Given a set $\cX\subset \R^d$, we denote by $\mathcal{P}(\cX)$ the set of probability distributions on $\cX$. Let $\cP_{p, \rm ac}(\R^d)$ be the set of all probability measures that are absolutely continuous with respect to the Lebesgue measure with finite $p$-th moments. Let $\cP(\R)^{\otimes d} \subset \cP(\R^d)$ be the set of all product probability measures. The Kullback--Leibler (KL) divergence between $\mu, \pi \in \cP(\R^d)$ is defined as $\kl{\mu}{\pi} = \int \log\left( \frac{\dd\mu}{\dd \pi} \right) \dd\mu$ if $\mu$ is absolutely continuous with respect to $\pi$ (denoted $\mu \ll \pi$) and $\kl{\mu}{\pi} = \infty$ otherwise. Here, $\frac{\dd\mu}{\dd \pi}$ denotes the Radon--Nikodym derivative of $\mu$ with respect to $\pi$. We denote by $\cL_d$ the Lebesgue measure on $\R^d$. When $\mu\ll\cL_d$, we use $\mu$ to denote both the measure and its density with respect to $\cL_d$. The (negative) differential entropy is then defined as $H(\mu) = \int \log(\mu(x)) \mu(\rd x)$ if $\mu\ll\cL_d$ and $\infty$ otherwise. 
We use $\|\cdot\|_2$ to denote the Euclidean norm. 
For any $m \in \R^d$ and any $d \times d$ positive definite matrix $\Sigma$, we denote by $\cN(m, \Sigma)$ the multivariate normal distribution with mean vector $m$ and covariance matrix $\Sigma$. For quantities $a,b$, we write $a = O(b)$ if there exists a constant $C > 0$ (which may depend on other parameters depending on context) such that $a \leq C b$.

\section{Mode Collapse}\label{sec:mode-collapse}

In this paper, we want to study the mode-collapsing phenomenon of MFVI when the target measure $\pi$ is a mixture distribution with two components $P_0,\,P_1 \in \cP(\R^d)$:
\begin{equation*}\label{Model-fair}
    \pi = w P_0 + (1-w ) P_1,\quad w \in (0,1).
\end{equation*}
As shown in~\cref{fig:mode-collapsing-Gaussian}, the MFVI optimizer collapses to a single component when $P_0,P_1$ are sufficiently separated. To quantify this separation, we introduce the notion of \emph{$\vae$-separatedness}.

\begin{definition}\label{defn:epsilon-separated}
 Given $\vae\in [0,1)$, we say that the probability measures $P_0$ and $P_1$ are {\it $\vae$-separated} if there exist ${j,k}\in \{1, \dots, d\}$ with $j\neq k$,  $b_j, b_k\in \R$ and $s_j, s_k\in \{-1, 1\}$ such that 
\[
P_0(H_j^{-}\cap H_k^{-}) \geq 1-\vae \quad {\rm and}\quad P_1(H_j^{+}\cap H_k^{+}) \geq 1-\vae,
\] 
where
\[
H_i^-= \{ x: s_i x_i <  b_i \}\quad \text{and}\quad  H_i^+=\{ x: s_i x_i >  b_i \}.
\]
We call $H_j^\pm, H_k^\pm$ the $\vae$-separating half-spaces. When $\vae =0$, we say that $P_0$ and $P_1$ are {\it well-separated}. In particular, well-separateness implies
\[
{\rm supp}(P_0) \subseteq \overline{H_j^{-}\cap H_k^{-}}\quad {\rm and}\quad {\rm supp}(P_1) \subseteq \overline{ H_j^{+}\cap H_k^{+}}.
\]
\end{definition}

\begin{figure}[h!]
    \centering
\tikzset{every picture/.style={line width=0.75pt}} %
\begin{tikzpicture}[x=0.75pt,y=0.75pt,yscale=-1,xscale=1]
\draw    (628,264.69) -- (73.67,264.33) ;
\draw [shift={(631,264.69)}, rotate = 180.04] [fill={rgb, 255:red, 0; green, 0; blue, 0 }  ][line width=0.08]  [draw opacity=0] (8.93,-4.29) -- (0,0) -- (8.93,4.29) -- cycle    ;
\draw    (146.36,8) -- (148.33,267) ;
\draw [shift={(146.33,5)}, rotate = 89.56] [fill={rgb, 255:red, 0; green, 0; blue, 0 }  ][line width=0.08]  [draw opacity=0] (8.93,-4.29) -- (0,0) -- (8.93,4.29) -- cycle    ;
\draw [fill={rgb, 255:red, 74; green, 144; blue, 226 }  ,fill opacity=0.67 ][line width=1.5] [line join = round][line cap = round]   (216.33,159) .. controls (216.33,161.54) and (214.2,163.61) .. (213.33,166) .. controls (208.33,172) and (207.55,181.08) .. (206.33,189) .. controls (203.52,207.28) and (207.28,218.94) .. (219.33,232) .. controls (239.25,253.57) and (252.47,247.99) .. (280.33,245) .. controls (288.32,244.14) and (296.08,245.99) .. (303.33,242) .. controls (308.44,239.19) and (313.16,235.7) .. (318.33,233) .. controls (321.52,231.34) and (327.2,232.41) .. (328.33,229) .. controls (331.53,219.41) and (303.02,213.07) .. (297.33,211) .. controls (293.41,209.57) and (291.15,208.1) .. (290.33,204) .. controls (288.67,195.71) and (286.66,192.32) .. (280.33,186) .. controls (272.03,177.69) and (251.76,180.49) .. (241.33,179) .. controls (233.72,177.91) and (228.65,167.66) .. (223.33,165) .. controls (222.65,164.66) and (216.33,159.4) .. (216.33,160) .. controls (217.53,163.4) and (220.33,158) .. (217.33,162) ;
\draw [fill={rgb, 255:red, 208; green, 2; blue, 27 }  ,fill opacity=0.45 ][line width=1.5] [line join = round][line cap = round]   (395.33,33) .. controls (395.33,35.54) and (393.2,37.61) .. (392.33,40) .. controls (387.33,46) and (386.55,55.08) .. (385.33,63) .. controls (382.52,81.28) and (410.28,74.44) .. (422.33,87.5) .. controls (442.25,109.07) and (431.47,121.99) .. (459.33,119) .. controls (467.32,118.14) and (475.08,119.99) .. (482.33,116) .. controls (487.44,113.19) and (492.16,109.7) .. (497.33,107) .. controls (500.52,105.34) and (528.33,119) .. (534.33,114.5) .. controls (537.53,104.91) and (600.33,87) .. (553.33,82.5) .. controls (522.33,78.5) and (530.33,92.5) .. (543.33,51.5) .. controls (580.33,31) and (477.66,48.82) .. (471.33,42.5) .. controls (463.03,34.19) and (433.76,28.99) .. (423.33,27.5) .. controls (415.72,26.41) and (407.65,41.66) .. (402.33,39) .. controls (401.33,29.5) and (395.33,33.4) .. (395.33,34) .. controls (396.53,37.4) and (399.33,32) .. (396.33,36) ;
\draw [line width=1.5]  [dash pattern={on 1.69pt off 2.76pt}]  (626.33,143.5) -- (75.33,142.5) ;
\draw [line width=1.5]  [dash pattern={on 1.69pt off 2.76pt}]  (364.33,6) -- (367,307.35) ;

\draw (239.33,201.07) node [anchor=north west][inner sep=0.75pt]  [font=\Large]  {$P_{0}$};
\draw (452.67,54.4) node [anchor=north west][inner sep=0.75pt]  [font=\Large]  {$P_{1}$};
\draw (92.67,106.4) node [anchor=north west][inner sep=0.75pt]  [font=\Large]  {$H_{1}^+$};
\draw (92.67,146.4) node [anchor=north west][inner sep=0.75pt]  [font=\Large]  {$H_{1}^-$};
\draw (330,274.4) node [anchor=north west][inner sep=0.75pt]  [font=\Large]  {$H_{0}^-$};
\draw (372.67,274.4) node [anchor=north west][inner sep=0.75pt]  [font=\Large]  {$H_{0}^+$};
\end{tikzpicture}
    \caption{Example of well-separated  $  P_0$ and $P_1$. The blue and red regions represent the supports of $P_0$ and $P_1$, respectively. }
    \label{fig:well-separated}
\end{figure}
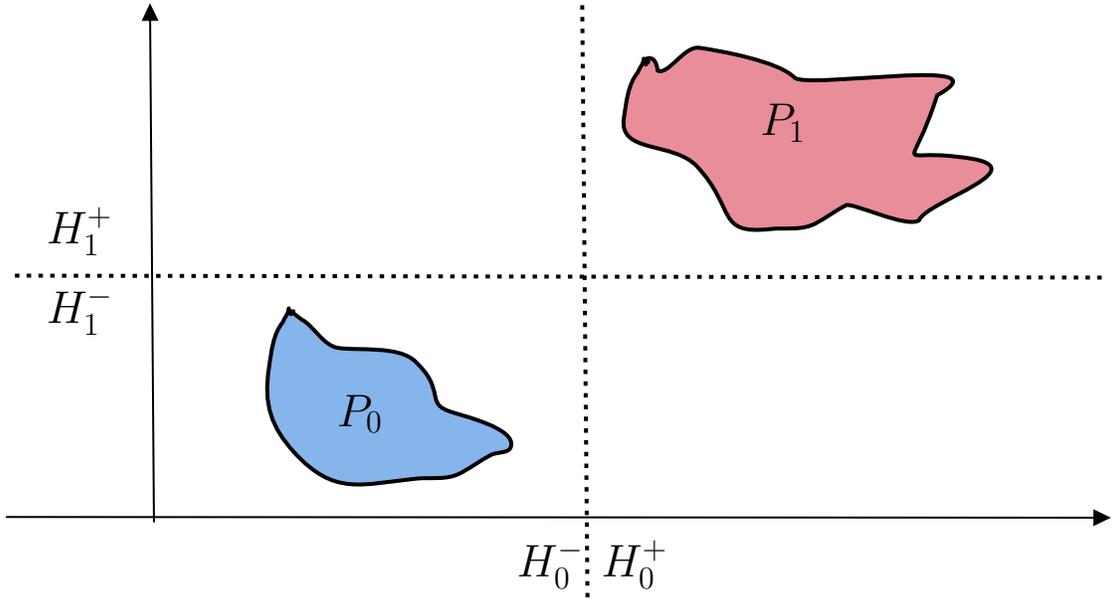

\Cref{fig:well-separated} presents a case where $P_0$ and $P_1$ are well-separated, i.e., their supports are separated by two orthogonal hyperplanes. Under this condition, \cref{Lemma:Only-ne} shows that no MFVI optimizer can simultaneously capture both components.

\begin{lemma}\label{Lemma:Only-ne}
   Assume that the classes $P_0$ and $P_1$ are {\it well-separated}. Then for   $\mu=\bigotimes_{i=1}^d \mu_i\in \mathcal{P}^{\otimes d}(\R)$, it cannot happen that  $\mu\ll P_0 $ and  $\mu\ll P_1  $ simultaneously.  As a consequence, one and only one of the following holds: 
   $$  \mu^*\in  \argmin_{\mu \in  \mathcal{P}^{\otimes d}(\R)} \kl{\mu}{P_0} \quad \text{or} \quad   \mu^*\in  \argmin_{\mu \in  \mathcal{P}^{\otimes d}(\R)} \kl{\mu}{P_1} . $$
\end{lemma}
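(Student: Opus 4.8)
The plan is to treat the two assertions separately. For the first (no simultaneous absolute continuity), I would exploit that well-separateness forces the mass of $P_0$ and $P_1$ onto the two closed regions $R_0 = \overline{H_j^-\cap H_k^-}$ and $R_1 = \overline{H_j^+\cap H_k^+}$, which overlap only along the codimension-two edge $\{s_jx_j=b_j,\ s_kx_k=b_k\}$. Concretely, since $\supp(P_0)\subseteq R_0\subseteq\{s_jx_j\le b_j\}$ one gets $P_0(H_j^+)=0$, and symmetrically $P_1(H_j^-)=0$. Hence if a product measure $\mu$ satisfied $\mu\ll P_0$ and $\mu\ll P_1$ simultaneously, then $\mu(H_j^+)=\mu(H_j^-)=0$, forcing $\mu(\{s_jx_j=b_j\})=1$; by the product structure this means $\mu_j=\delta_{s_jb_j}$, so $\mu$ is carried by a Lebesgue-null hyperplane. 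Since the components are absolutely continuous, $P_0(\{x_j=s_jb_j\})=0$, which contradicts $\mu\ll P_0$. This settles the first assertion; the only hypothesis I lean on beyond the statement is that $P_0,P_1$ do not charge hyperplanes, which is automatic under the standing absolute-continuity assumption on $\pi$.

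For the consequence I would read $\mu^*$ as an MFVI optimizer of $\pi=wP_0+(1-w)P_1$, so $\kl{\mu^*}{\pi}<\infty$ and in particular $\mu^*\ll\pi\ll\cL_d$. The decisive step is to show that such a product optimizer cannot straddle both components. Because $\pi$ charges only $R_0\cup R_1$, the two off-diagonal open regions $Q_{-+}=\{s_jx_j<b_j,\ s_kx_k>b_k\}$ and $Q_{+-}=\{s_jx_j>b_j,\ s_kx_k<b_k\}$ are $\pi$-null, hence $\mu^*$-null. Factorizing, $\mu^*(Q_{-+})=\mu^*_j(\{s_jx_j<b_j\})\,\mu^*_k(\{s_kx_k>b_k\})=0$ and likewise $\mu^*_j(\{s_jx_j>b_j\})\,\mu^*_k(\{s_kx_k<b_k\})=0$. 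A short case analysis on which factor vanishes, discarding the degenerate cases (a marginal collapsing to a point mass on a separating hyperplane, which would force $\kl{\mu^*}{\pi}=\infty$ and so cannot occur for an optimizer), shows that $\mu^*$ is concentrated on $R_0$ or on $R_1$. I expect this factorization-plus-geometry argument to be the crux, since it is precisely where the product (mean-field) constraint does the work.

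Having localized $\mu^*$ to, say, $R_0$, I would record the key identity that on $R_0$ one has $\pi=wP_0$ up to the Lebesgue-null edge, so that $\frac{\d\mu^*}{\d\pi}=w^{-1}\frac{\d\mu^*}{\d P_0}$ holds $\mu^*$-a.e., whence $\kl{\mu^*}{\pi}=\kl{\mu^*}{P_0}-\log w$. The same identity holds for any product measure supported in $R_0$, in particular for $\nu_0\in\argmin_{\nu}\kl{\nu}{P_0}$, which is supported in $R_0$ because $\nu_0\ll P_0$. Minimality of $\mu^*$ for $\kl{\cdot}{\pi}$ then yields $\kl{\mu^*}{P_0}\le\kl{\nu_0}{P_0}$, so $\mu^*\in\argmin_{\nu}\kl{\nu}{P_0}$; the case $R_1$ is symmetric, giving \emph{at least one}. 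Finally I would establish \emph{at most one} by invoking the first assertion: membership in an $\argmin$ set entails absolute continuity with respect to the corresponding component (finite KL), so belonging to both sets would make $\mu^*\ll P_0$ and $\mu^*\ll P_1$ at once, which is impossible. Combining the two gives exactly one, as claimed.
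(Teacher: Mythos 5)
Your proof is correct, but it takes a genuinely different route from the paper's, so a comparison is worth recording. For the first assertion, the paper works through the product structure: from $\mu\ll P_0$ and $\mu\ll P_1$ it deduces $\mu(H_i^-)>0$ and $\mu(H_i^+)>0$ for $i\in\{j,k\}$, factorizes to get $\mu(H_j^+\cap H_k^-)=\mu_j(\{r: s_jr>b_j\})\,\mu_k(\{r: s_kr<b_k\})>0$, and contradicts $P_0(H_j^+\cap H_k^-)=0$. You instead trap $\mu$ on the single separating hyperplane $\{s_jx_j=b_j\}$ (both open half-spaces are $\mu$-null) and contradict the fact that this hyperplane is $P_0$-null. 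Your variant uses only one coordinate and, as written, never actually needs the product hypothesis: it proves the stronger statement that well-separated $P_0,P_1$ are mutually singular, so \emph{no} probability measure can be absolutely continuous with respect to both. One cosmetic point: you justify $P_0(\{x_j=s_jb_j\})=0$ by appealing to absolute continuity of the components, an assumption the lemma does not make; this detour is unnecessary, since well-separatedness itself gives $P_0(H_j^-\cap H_k^-)=1$ with $H_j^-$ open, hence $P_0(\{s_jx_j=b_j\})=0$. For the consequence, the paper's proof simply stops after the first assertion and treats the dichotomy as immediate, whereas you supply the missing localization argument (the off-diagonal quadrants are $\pi$-null, hence $\mu^*$-null; factorize; exclude the point-mass-on-hyperplane case) together with the decomposition $\kl{\mu^*}{\pi}=\kl{\mu^*}{P_0}-\log w$, which is precisely the identity the paper only records later in \cref{sec:instability-mfvi}. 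What your write-up buys is a complete derivation of the dichotomy; what it quietly assumes --- exactly as the paper does --- is that the relevant infima are finite, so that optimizers have finite KL and argmin membership entails absolute continuity with respect to the corresponding component. Also, your comparison step would be cleaner run against an arbitrary product measure $\nu\ll P_0$ (with $\kl{\nu}{P_0}=\infty$ handled trivially) rather than against a minimizer $\nu_0$, whose existence you never establish.
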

\begin{proof}
{\it Ad adsusrbudm}; we  assume that there exists $\mu=\bigotimes_{i=1}^d \mu_i\in \mathcal{P}^{\otimes d}(\R)$ such that $\mu\ll P_0 $ and  $\mu\ll P_1  $. Since $P_0$ and $P_1$ are well-separated, there exists ${j,k}\in \{1, \dots, d\}$ with $j\neq k$,  $b_j, b_k\in \R$ and $s_j, s_k\in \{-1, 1\}$ such that 
\begin{equation*}
 {\rm supp}(P_0)\subset  H_i^-=\{ x: s_i x_i <  b_i \} \quad {\rm and}\quad {\rm supp}(P_1)\subset  H_i^+=\{ x: \ s_i x_i >  b_i \}\quad \text{for } i\in \{j,k\} . 
\end{equation*}
Moreover, as $\mu\ll P_0 $ and  $\mu\ll P_1  $, we have $\mu(H_i^+)>0 $ and $\mu(H_i^-)>0 $.  Then it holds that 
\begin{equation*}
\mu( H_j^+ \cap H_k^-  ) =\mu_j(\{ r\in \R: s_j r > b_j \})\mu_k(\{ r\in \R: s_k r<b_k\} )>0.
\end{equation*}
However, by the well-separatedness assumption, $P_0(H_j^+ \cap H_k^-)=0$, which contradicts $\mu\ll P_0 $ and concludes the proof.
 \end{proof}
 In practice, one often encounters mixture distributions that do not satisfy the exact separation of supports, such as the Gaussian-mixture example in \cref{fig:mode-collapsing-Gaussian}. In such cases, we use $\vae$-separation to \emph{quantify} the degree of separation between components. 

Furthermore, \cref{Lemma:Only-ne} generalizes to the setting where both components are $\vae$-separated for some sufficiently small $\vae>0$. To the best of our knowledge, \cref{thm:MFVI-concentration} (see \cref{proof-thm-MFVI-concentration} for its proof) provides the first theoretical explanation of mode collapse: the mass that MFVI assigns to one component can be made arbitrarily small when the two components are sufficiently far apart. Importantly, this result is \emph{algorithm-free}: any method that accurately computes an MFVI optimizer will necessarily exhibit this phenomenon.

\begin{theorem}\label{thm:MFVI-concentration}
   Let $P_0,P_1$ be $\vae$-separated and $\pi = wP_0 + (1-w)P_1 \in \mathcal{P}_{\rm ac}(\R^d)$. Suppose further that
   $$0<\vae \leq e^{-2b} \quad \text{where}\quad b:=\log( 2) + \inf_{\mu \in \cP(\R)^{\otimes d}}\kl{\mu}{\pi},$$ then any MFVI optimizer $\mu^*=\bigotimes_{i=1}^d \mu_i^*$ with respect to $\pi$ satisfies
   \begin{equation}\label{eq:MFVI-upper-bound}
            \min\left\{\mu^*(H_j^{-}\cap H_k^{-}), \mu^*(H_j^{+}\cap H_k^{+})\right\} \leq  \sqrt{\frac{b}{2\log (\vae^{-1})}} - \frac{b}{2\log (\vae^{-1})},
   \end{equation}
    where $H_j^{\pm}, H_k^{\pm}$ are the $\vae$-separating half-spaces of $P_0, P_1$.
\end{theorem}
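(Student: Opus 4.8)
The plan is to reduce everything to the two coordinates $j,k$ singled out by $\vae$-separation and to exploit the product structure of $\mu^*$ on the four sign-pattern regions they determine. Writing $p=\mu_j^*(\{s_jx_j<b_j\})$ and $q=\mu_k^*(\{s_kx_k<b_k\})$, and using that $\pi$ (hence $\mu^*\ll\pi$) is absolutely continuous so the hyperplane boundaries are null, I get
\[
\mu^*(H_j^-\cap H_k^-)=pq,\qquad \mu^*(H_j^+\cap H_k^+)=(1-p)(1-q),
\]
while the two ``cross'' regions $H_j^+\cap H_k^-$ and $H_j^-\cap H_k^+$ receive masses $(1-p)q$ and $p(1-q)$. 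The elementary identity $pq\,(1-p)(1-q)=(1-p)q\cdot p(1-q)$ then gives
\[
\min\{\mu^*(H_j^-\cap H_k^-),\mu^*(H_j^+\cap H_k^+)\}\le\sqrt{\mu^*(H_j^+\cap H_k^-)\,\mu^*(H_j^-\cap H_k^+)}\le\max\{\mu^*(H_j^+\cap H_k^-),\mu^*(H_j^-\cap H_k^+)\},
\]
so it suffices to show that the heavier of the two cross regions carries little $\mu^*$-mass.

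Next I would show both cross regions are light under $\pi$. Since $H_j^+\cap H_k^-\subseteq H_j^+\subseteq(H_j^-\cap H_k^-)^c$ and $H_j^+\cap H_k^-\subseteq H_k^-\subseteq(H_j^+\cap H_k^+)^c$, $\vae$-separation yields $P_0(H_j^+\cap H_k^-)\le\vae$ and $P_1(H_j^+\cap H_k^-)\le\vae$; averaging over the mixture weight gives $\pi(H_j^+\cap H_k^-)\le\vae$, and symmetrically $\pi(H_j^-\cap H_k^+)\le\vae$. This is the geometric heart of the argument: each cross region lies on the ``wrong side'' of a separating hyperplane for \emph{both} components at once.

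Finally I would convert the $\pi$-smallness of the cross regions into a bound on their $\mu^*$-mass, via the information cost of placing mass where $\pi$ is light. Let $E$ be the heavier cross region, $m:=\mu^*(E)$, $\pi(E)\le\vae$. As $\mu^*$ is optimal, $\kl{\mu^*}{\pi}=\inf_{\mu\in\cP(\R)^{\otimes d}}\kl{\mu}{\pi}=b-\log2$, and data processing for the coarsening $\{E,E^c\}$ gives
\[
b-\log2\ge m\log\frac{m}{\pi(E)}+(1-m)\log\frac{1-m}{1-\pi(E)}.
\]
Using $\pi(E)\le\vae$ and $1-\pi(E)\le1$, then bounding the binary entropy $-[m\log m+(1-m)\log(1-m)]\le\log2$ (precisely the reason the definition of $b$ carries the additive $\log2$), one is left with a scalar inequality relating $m$, $\log(\vae^{-1})$ and $b$. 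The crude linearization already yields $m\le b/\log(\vae^{-1})$; retaining the entropy terms and solving the resulting inequality for $m$, together with $\min\{\mu^*(H_j^-\cap H_k^-),\mu^*(H_j^+\cap H_k^+)\}\le m$ from the first step, produces the stated closed form. The hypothesis $\vae\le e^{-2b}$ forces $\log(\vae^{-1})\ge2b$, keeping the right-hand side nonnegative (indeed at most $1/4$) so the conclusion is meaningful.

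The hard part will be this last conversion: extracting the exact form $\sqrt{b/2\log(\vae^{-1})}-b/2\log(\vae^{-1})$ requires a careful, slightly lossy estimate of the binary divergence rather than the crude bound $m\log(\vae^{-1})\lesssim b$, with constants tracked so that the $\log2$ shift cancels exactly. The remaining care is measure-theoretic bookkeeping: absolute continuity of $\pi$ (so the half-space boundaries are $\mu^*$-null and $p,q$ are well defined), finiteness and attainment of the infimum defining $b$, and the harmless degenerate cases $m\le\vae$ or $p,q\in\{0,1\}$.
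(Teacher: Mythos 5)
Your Steps 1 and 2 are correct, and the identity $pq\,(1-p)(1-q)=p(1-q)\cdot q(1-p)$ is a genuinely nice way to tie the diagonal masses to the cross masses (it is sharper than what the paper uses; the paper instead assumes $\min\{\mu^*(H_j^-\cap H_k^-),\mu^*(H_j^+\cap H_k^+)\}\geq c$, deduces that all four marginal masses lie in $[c_0,1-c_0]$ with $c=c_0(1-c_0)$, and concludes that \emph{each} cross region has mass at least $c_0^2$). The genuine gap is in Step 3: bounding only the heavier cross region $E$ through the two-set data-processing inequality provably cannot produce the stated bound, and no bookkeeping of the binary entropy terms will rescue it. Write $t:=\frac{b}{2\log(\vae^{-1})}\in(0,1/4]$ and $m:=\mu^*(E)$. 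The only information your Step 3 extracts is $m\log\frac{m}{\pi(E)}+(1-m)\log\frac{1-m}{1-\pi(E)}\leq b-\log 2$ with $\pi(E)\leq\vae$, i.e.\ $m\,\tfrac{b}{2t}\leq b-\log 2+H(m)+O(\vae)$, where $H(m)\in[0,\log 2]$ is the binary entropy of $m$. The largest $m$ compatible with this constraint tends to $2t$ as $b\to\infty$ (e.g.\ $b=100$, $t=0.2$, $\vae=e^{-250}$ admits $m\approx 0.4$): the $\log 2$ of entropy slack you hope to ``cancel exactly'' is precisely what allows $m$ to reach $2t$ rather than $t$. Since $2t>\sqrt{t}-t$ exactly when $t>1/9$, for $t\in(1/9,1/4)$ your route terminates with a bound strictly weaker than \eqref{eq:MFVI-upper-bound} --- at $t=0.2$ it gives $\approx 0.4$, which is even weaker than the trivial bound $1/4$ that always holds for the minimum of the two diagonal masses.

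The repair is to run your KL/data-processing step on the \emph{union} of the two cross regions rather than on one of them, which is exactly what the paper does: with $S^c:=(H_j^+\cap H_k^-)\cup(H_j^-\cap H_k^+)$, $\delta:=\mu^*(S^c)$ and $\pi(S^c)\leq\vae$ (your Step 2 gives this), the same entropy computation yields $\delta\log(\vae^{-1})\leq b$, i.e.\ $\delta\leq 2t$; the factor-of-two loss disappears because the entropy slack is now paid once for the \emph{total} cross mass. Your own geometric-mean identity then finishes in one line: $\min\{\mu^*(H_j^-\cap H_k^-),\mu^*(H_j^+\cap H_k^+)\}\leq\sqrt{\mu^*(C_1)\mu^*(C_2)}\leq\delta/2\leq t\leq\sqrt{t}-t$, the last inequality because $t\leq 1/4$. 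Note this would actually prove a \emph{stronger} bound ($t$ instead of $\sqrt{t}-t$) than the paper obtains via its contradiction argument, so your architecture is sound and even advantageous; but as written, the single-region bound in Step 3 is a real quantitative gap, not a detail to be tracked carefully.
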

\begin{remark} We underline some useful facts regarding \cref{thm:MFVI-concentration}. 

\begin{enumerate}[label = (\roman*)]
\item Since $0\leq \vae \leq e^{-2b}$, we know that $0\leq
    \frac{b}{2\log(\vae^{-1})} \leq \frac{1}{4}$. Since the function $x\mapsto \sqrt{x} -x$ is monotone increasing in $(0,1/4]$,  the function 
$$ \vae \mapsto  \sqrt{\frac{b}{2\log (\vae^{-1})}} - \frac{b}{2\log (\vae^{-1})} $$
is monotone increasing in $(0, e^{-2b}]$. 
    \item  Hence, the right-hand-side of \eqref{eq:MFVI-upper-bound} is upper bounded by $1/4$ regardless of the choice of the mixing weight $w$.
    \item Furthermore, when $\vae \to 0$, the right-hand-side of \eqref{eq:MFVI-upper-bound} decreases to 0, recovering the result in \cref{Lemma:Only-ne}.
\end{enumerate}
\end{remark}
As a direct consequence of \cref{thm:MFVI-concentration}, the MFVI optimizer falls into one of the two regimes described below.
\begin{corollary}
     Let $P_0,P_1$ be $\vae$-separated and $\pi = wP_0 + (1-w)P_1$. Suppose further that
   $$0<\vae \leq e^{-2b} \quad \text{where}\quad b:=\log( 2) + \inf_{\mu \in \cP(\R)^{\otimes d}}\kl{\mu}{\pi},$$ then there exists a convex set $S$  such that 
   for every $ \mu^* \in \argmin_{\mu \in \cP(\R)^{\otimes d}} \kl{\mu}{\pi}$,  
   $$\mu^*(S) \leq  \sqrt{\frac{b}{2\log (\vae^{-1})}} - \frac{b}{2\log (\vae^{-1})},$$
   and one of the following holds: 
   \begin{enumerate}
       \item $P_0(S)\geq 1-\vae$ and $P_1(S)\leq \vae$, or 
       \item $P_1(S)\geq 1-\vae$ and $P_0(S)\leq \vae$. 
   \end{enumerate}
\end{corollary}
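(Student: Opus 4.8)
The plan is to read the required convex set straight off the $\vae$-separating half-spaces furnished by \cref{defn:epsilon-separated}. I set $S_0:=H_j^{-}\cap H_k^{-}$ and $S_1:=H_j^{+}\cap H_k^{+}$; each is an intersection of two half-spaces and hence convex, and the set $S$ delivered by the corollary will be one of these two. The first step is elementary bookkeeping on $P_0,P_1$. Because $\pi=wP_0+(1-w)P_1$ with $w\in(0,1)$, any $\pi$-null set is both $P_0$- and $P_1$-null, so $P_0,P_1\ll\pi\ll\cL_d$; in particular the two coordinate hyperplanes $\{s_jx_j=b_j\}$ and $\{s_kx_k=b_k\}$ are $P_0$- and $P_1$-null, and the four quadrants $H_j^{\pm}\cap H_k^{\pm}$ partition $\R^d$ up to these null sets. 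Since $S_0$ and $S_1$ are disjoint, $\vae$-separatedness then yields at once $P_0(S_1)\le 1-P_0(S_0)\le\vae$ and $P_1(S_0)\le 1-P_1(S_1)\le\vae$. Thus $S_0$ satisfies alternative~(1) and $S_1$ satisfies alternative~(2): whichever of the two is eventually selected, one of the dichotomy conditions holds automatically.

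The second step is to bring in the mass bound. Writing $\beta:=\sqrt{b/(2\log\vae^{-1})}-b/(2\log\vae^{-1})$ for the right-hand side of \eqref{eq:MFVI-upper-bound}, \cref{thm:MFVI-concentration} states precisely that every optimizer $\mu^*\in\argmin_{\mu\in\cP(\R)^{\otimes d}}\kl{\mu}{\pi}$ obeys $\min\{\mu^*(S_0),\mu^*(S_1)\}\le\beta$. Hence for each individual optimizer at least one of $S_0,S_1$ carries mass $\le\beta$, and by the first step that same set already realizes the matching alternative. This is the easy half of the statement and requires nothing beyond combining \cref{defn:epsilon-separated} with \cref{thm:MFVI-concentration}.

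The delicate point, which I expect to be the main obstacle, is that the corollary asks for a single $S$ valid for all optimizers at once, whereas the previous step only guarantees, optimizer by optimizer, that \emph{some} element of $\{S_0,S_1\}$ works. To produce a uniform choice I would show that the ``collapse side''---the index $\ell\in\{0,1\}$ with $\mu^*(S_\ell)\le\beta$---is constant over the whole set $\argmin_{\mu}\kl{\mu}{\pi}$, and then take $S:=S_\ell$ for that common value. The cleanest sufficient condition is uniqueness of the MFVI optimizer, in which case constancy is vacuous; more generally I would try to rule out a pair of optimizers with $\mu_a^*(S_0)\le\beta<\mu_a^*(S_1)$ and $\mu_b^*(S_1)\le\beta<\mu_b^*(S_0)$. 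This is exactly the regime of the reflection-symmetric targets in \cref{fig:mode-collapsing-Gaussian}, where the observed initialization-dependent mode switching reflects two genuinely distinct minimizers collapsing to opposite components; so the constancy step carries the real content here, and I would complete the uniform statement under a uniqueness (or genericity) hypothesis on the optimizer that singles out one collapse direction, taking $S$ to be the corresponding quadrant.
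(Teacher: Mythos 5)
Your first two paragraphs are exactly the paper's intended argument: the paper offers no separate proof of this corollary (it is announced as ``a direct consequence of \cref{thm:MFVI-concentration}''), and the intended reasoning is precisely what you wrote --- take $S_0=H_j^-\cap H_k^-$ and $S_1=H_j^+\cap H_k^+$, note each is convex as an intersection of two half-spaces, observe that disjointness plus $\vae$-separatedness gives $P_0(S_1)\leq 1-P_0(S_0)\leq\vae$ and $P_1(S_0)\leq\vae$ so that $S_0$ realizes alternative~(1) and $S_1$ alternative~(2), and then let \cref{thm:MFVI-concentration} select, for each optimizer, whichever of the two quadrants carries mass at most $\sqrt{b/(2\log\vae^{-1})}-b/(2\log\vae^{-1})$.

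The ``delicate point'' you isolate in your third paragraph is real, but it is a defect in the corollary's phrasing rather than a gap you need to close. As literally stated, the quantifiers read $\exists S\,\forall\mu^*$, and your own counterexample reasoning shows this form is false in general: for the symmetric target $\pi=\tfrac12\cN(-m\mathbf{1},I_2)+\tfrac12\cN(m\mathbf{1},I_2)$ with $m$ large, the reflection $x\mapsto -x$ preserves $\pi$ and the product class, so it maps optimizers to optimizers while swapping $S_0$ and $S_1$; an optimizer collapsed onto the $P_0$ side is close in total variation to the product measure $P_0$ itself and therefore assigns mass near $1-\vae$ (far above the bound $\leq 1/4$) to \emph{any} convex $S$ with $P_0(S)\geq 1-\vae$, and symmetrically for the other side, so no single $S$ can serve all optimizers. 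The paper's surrounding prose (``the MFVI optimizer falls into one of the two regimes'') and its remark after \cref{coro:Sufficient-condition} that optimizers need not be unique make clear the intended reading is $\forall\mu^*\,\exists S\in\{S_0,S_1\}$, i.e.\ $S$ may depend on the optimizer. Under that reading your first two paragraphs already constitute a complete proof, and no uniqueness or genericity hypothesis is needed; under the literal reading, no proof exists, and your instinct that only a hypothesis singling out one collapse direction could rescue it is correct.
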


Returning to the setting of \cref{fig:mode-collapsing-Gaussian}, \cref{coro:Gaussian-m} (see its proof in \cref{proof-coro-Gaussian-m}) implies that for sufficiently large $m$, any MFVI optimizer collapses to one of the two components. This is unsurprising as the two Gaussian components are $\vae_m$-separated with $\vae_m \to 0$ when $m \to \infty$. The same conclusion holds in $\R^d$ provided there exist two orthogonal hyperplanes approximately separating the components. In the sequel, we use $\Phi$ to denote the CDF of the standard normal distribution on $\R$.
\begin{corollary}\label{coro:Gaussian-m}
    Let $\pi = w \cN(-m, I_d) + (1-w) \cN(m, I_d)$ for $m=(m_1, \dots, m_d) \in \R^d$ and $w\in (0,1)$. Assume that there exist  $\delta>0$  and  $j,k\in [d] $ with $ j\neq k$ and such that
    \begin{equation*}\label{eq:Gaussian-cdf}
         \begin{aligned}
        \Phi(|m_j|)\Phi(|m_k|)\geq  1- e^{-\frac{2b}{(1-\sqrt{1-4\delta})^2}},
    \end{aligned}
    \end{equation*}
    where  $b:=\log(2) + \inf_{\mu \in \cP(\R)^{\otimes d}}\kl{\mu}{\pi}$.  Then 
\begin{equation}\label{eq:Gaussian-result}
     \min\left\{\mu^*(H_{j}^{-}\cap H_{k}^{-}), \mu^*(H_{j}^{+}\cap H_{k}^{+})\right\} \leq  \delta,
\end{equation}
where  $H_s^{\pm } := \{x\in \R^d: \pm x_s<0\}$ for $s\in \{k,j\}$. 
In particular, the bound \eqref{eq:Gaussian-result} holds when $\min\{|m_j|,|m_k|\}\geq \Phi^{-1}\left(\beta^{1/2}\right)$ for $$\beta = 1- \exp\left(-\frac{2\log(2) - 2\max\{\log(w), \log (1-w)\}}{(1-\sqrt{1-4\delta})^2}\right).$$
\end{corollary}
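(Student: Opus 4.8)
The plan is to derive this corollary as a direct instantiation of \cref{thm:MFVI-concentration}, using the explicit Gaussian geometry to pin down the separation parameter $\vae$ and, for the second part, the trial measure that controls $b$. First I would identify the separating half-spaces: taking thresholds $b_j=b_k=0$ and choosing the signs $s_i$ so that each component lies on the correct side (concretely $s_i=\mathrm{sign}(m_i)$ makes $P_0$ concentrate in $H_i^-$ and $P_1$ in $H_i^+$ for $i\in\{j,k\}$). Because $P_0=\cN(-m,I_d)$ and $P_1=\cN(m,I_d)$ have identity covariance, their coordinates are independent, so the joint separation probabilities factorize across coordinates $j$ and $k$. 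This yields $P_0(H_j^-\cap H_k^-)=\Phi(|m_j|)\,\Phi(|m_k|)$ and, by the symmetry $m\mapsto -m$, the identical value for $P_1(H_j^+\cap H_k^+)$. Hence $P_0,P_1$ are $\vae$-separated with the explicit choice $\vae=1-\Phi(|m_j|)\Phi(|m_k|)$, which is strictly positive for finite $m$; the half-spaces displayed in the corollary are exactly this instance.

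Next I would translate the conclusion of \cref{thm:MFVI-concentration} into the target bound $\delta$. Writing $t:=\tfrac{b}{2\log(\vae^{-1})}$, the theorem controls the minimum mass by $\sqrt{t}-t$, and the monotonicity of $u\mapsto u-u^2$ on $[0,\tfrac12]$ (noted in the remark following the theorem) shows that $\sqrt{t}-t\le\delta$ is equivalent to $\sqrt{t}\le\tfrac{1-\sqrt{1-4\delta}}{2}$, i.e. $t\le\big(\tfrac{1-\sqrt{1-4\delta}}{2}\big)^2$. Rearranging this inequality in $\vae$ produces exactly the stated hypothesis $\Phi(|m_j|)\Phi(|m_k|)\ge 1-\exp\!\big(-\tfrac{2b}{(1-\sqrt{1-4\delta})^2}\big)$. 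I would also check that the admissibility condition $\vae\le e^{-2b}$ of the theorem comes for free: since $(1-\sqrt{1-4\delta})^2\le 1$ on $\delta\in(0,\tfrac14]$, the hypothesis already forces $\vae\le e^{-2b/(1-\sqrt{1-4\delta})^2}\le e^{-2b}$, so the theorem applies.

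For the explicit ``in particular'' condition I would make $b$ computable by upper bounding the MFVI value with a product trial measure. The key observation is that each Gaussian component is itself a product measure, so $P_1\in\cP(\R)^{\otimes d}$ is admissible in the infimum defining $b$; since $\pi\ge(1-w)P_1$ pointwise, one has $\tfrac{\d P_1}{\d\pi}\le(1-w)^{-1}$ and therefore $\kl{P_1}{\pi}\le-\log(1-w)$, and symmetrically $\kl{P_0}{\pi}\le-\log w$. This yields $b\le\log 2-\max\{\log w,\log(1-w)\}$. Because the threshold $1-\exp(-2b/(1-\sqrt{1-4\delta})^2)$ is increasing in $b$, replacing $b$ by this upper bound only strengthens the requirement into $\beta$; and since $\Phi$ is increasing, $\min\{|m_j|,|m_k|\}\ge\Phi^{-1}(\beta^{1/2})$ gives $\Phi(|m_j|),\Phi(|m_k|)\ge\beta^{1/2}$, hence $\Phi(|m_j|)\Phi(|m_k|)\ge\beta$, which implies the main hypothesis and closes the argument.

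The individual steps are short, so the main obstacle is bookkeeping rather than any single hard estimate: one must choose the half-space signs consistently so that the factorized marginals come out as $\Phi(|m_i|)$ rather than the useless $\Phi(-|m_i|)$, track the correct (smaller) root when inverting $u-u^2=\delta$, and---most importantly---recognize that the product structure of the Gaussian components is what simultaneously factorizes the separation probability \emph{and} supplies an admissible product trial measure making $b$ explicit. Without that last observation the threshold would still involve the intractable MFVI optimum rather than the closed-form $\beta$.
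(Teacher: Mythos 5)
Your proposal is correct and follows essentially the same route as the paper's proof: you instantiate \cref{thm:MFVI-concentration} with $\vae = 1-\Phi(|m_j|)\Phi(|m_k|)$ via the product structure of the Gaussian components, invert $\sqrt{t}-t\le\delta$ along the smaller root, note that the hypothesis subsumes the admissibility condition $\vae\le e^{-2b}$, and make $b$ explicit by bounding $\kl{P_i}{\pi}\le -\log w$ (resp.\ $-\log(1-w)$). The only cosmetic difference is that the paper packages this last trial-measure bound as \cref{lemma:kl-mixture-bound} and reduces to $m_j,m_k>0$ by a change of basis, whereas you choose the signs $s_i=\mathrm{sign}(m_i)$ directly and rederive the KL bound from the pointwise inequality $\pi\ge w P_0$, $\pi \ge (1-w)P_1$.
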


\subsection{Instability of the MFVI Optimizer}\label{sec:instability-mfvi}
In this subsection, we show that MFVI is unstable when $P_0,P_1$ are well-separated.  Define $$ \mathcal{A}_i=\{ \mu\in \mathcal{P}^{\otimes d}(\R): \ \mu\ll P_i\}, \quad i\in \{0,1\}.$$ By \cref{Lemma:Only-ne}, it must follow that 
 \begin{align*}
       \kl{\mu^*}{\pi} &= \indic{\mu^*\in \mathcal{A}_0}\int \log\left(\frac{\dd\mu^*}{w \dd P_0}\right) \dd\mu^* + \indic{\mu^*\in \mathcal{A}_1} \int  \log\left(\frac{\dd\mu^*}{(1-w) \dd P_1}\right) \dd\mu^*\\
       &=\indic{\mu^*\in \mathcal{A}_0}\left(\int \log\left(\frac{\dd\mu^*}{ \dd P_0}\right) \dd\mu^* - \log(w)\right)\\
       &\qquad + \indic{\mu^*\in \mathcal{A}_1}\left( \int  \log\left(\frac{\dd\mu^*}{ \dd P_1}\right) \dd\mu^*-\log(1-w) \right).
 \end{align*}
Consequently, as the mixing weight $w$ varies, the MFVI optimizer exhibits phase transition.
\begin{corollary}\label{coro:Sufficient-condition}
      Assume that the classes $P_0$ and $P_1$ are {\it well-separated}. Suppose further that $\inf_{\mu \in  \mathcal{P}^{\otimes d}(\R)}  H(\mu \mid P_0) + \inf_{\mu \in  \mathcal{P}^{\otimes d}(\R)}  H(\mu \mid P_1) >0$. Define 
      \begin{equation}\label{coro:phase-transition}
         w^*:= \frac{\inf_{\mu \in  \mathcal{P}^{\otimes d}(\R)}  H(\mu \mid P_0)}{\inf_{\mu \in  \mathcal{P}^{\otimes d}(\R)}  H(\mu \mid P_0) + \inf_{\mu \in  \mathcal{P}^{\otimes d}(\R)}  H(\mu \mid P_1)}.
      \end{equation}
      Then the following hold:
      \begin{enumerate}
          \item If $w > w^*$,  $\mu^* \in \argmin_{\mu \in  \mathcal{P}^{\otimes d}(\R)}  H(\mu \mid P_0).$
 \item If $w < w^*$,  $\mu^* \in \argmin_{\mu \in  \mathcal{P}^{\otimes d}(\R)}  H(\mu \mid P_1). $
      \end{enumerate}
\end{corollary}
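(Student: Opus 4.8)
The plan is to exploit the dichotomy already encoded in the displayed identity preceding the statement. Since $P_0$ and $P_1$ are well-separated, \cref{Lemma:Only-ne} forces every product measure with finite divergence to be absolutely continuous with respect to exactly one component, so the feasible set $\mathcal{P}^{\otimes d}(\R)$ splits (up to measures of infinite divergence) into $\mathcal{A}_0$ and $\mathcal{A}_1$, and the MFVI value function becomes the minimum of two decoupled sub-problems. Deciding which component $\mu^*$ concentrates on then reduces to comparing the two optimal values as functions of $w$, a one-dimensional computation.

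First I would make the dichotomy quantitative. Writing $p_i^{\pm}$ for the mass that the $i$-th marginal assigns to the two sides of the separating threshold, the factorization used in the proof of \cref{Lemma:Only-ne} gives $\mu(H_j^{+}\cap H_k^{-}) = p_j^{+}p_k^{-}$ and $\mu(H_j^{-}\cap H_k^{+}) = p_j^{-}p_k^{+}$. Because $\pi$ is absolutely continuous and assigns no mass to these two ``cross'' regions (they are disjoint from $\overline{H_j^{-}\cap H_k^{-}}\supseteq\mathrm{supp}(P_0)$ and from $\overline{H_j^{+}\cap H_k^{+}}\supseteq\mathrm{supp}(P_1)$), finiteness of $H(\mu\mid\pi)$ forces both products to vanish; since $\mu\ll\pi\ll\mathcal{L}_d$ is absolutely continuous the separating boundaries are $\mu$-null, so a short case analysis shows $\mu$ is supported entirely in $H_j^{-}\cap H_k^{-}$ or entirely in $H_j^{+}\cap H_k^{+}$, i.e. $\mu\in\mathcal{A}_0$ or $\mu\in\mathcal{A}_1$. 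On $\mathcal{A}_0$ the separating structure gives $\pi = w P_0$ on $\mathrm{supp}(\mu)$ up to a $\pi$-null set, whence $H(\mu\mid\pi) = H(\mu\mid P_0) - \log w$, and symmetrically $H(\mu\mid\pi) = H(\mu\mid P_1) - \log(1-w)$ on $\mathcal{A}_1$. Writing $a_i := \inf_{\mu\in\mathcal{P}^{\otimes d}(\R)} H(\mu\mid P_i)$, taking infima over each branch yields optimal values $a_0 - \log w$ and $a_1 - \log(1-w)$; the additive constants $-\log w$ and $-\log(1-w)$ do not affect the minimizer within a branch, so any global optimizer lying in $\mathcal{A}_0$ automatically belongs to $\argmin_{\mu} H(\mu\mid P_0)$, and likewise for $\mathcal{A}_1$.

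It then remains to decide which branch wins. A global optimizer lands in $\mathcal{A}_0$ precisely when $a_0 - \log w \le a_1 - \log(1-w)$, equivalently $\log\tfrac{w}{1-w} \ge a_0 - a_1$. The map $w\mapsto\log\tfrac{w}{1-w}$ is strictly increasing on $(0,1)$, so this holds exactly when $w$ exceeds the unique critical weight $w^*$ at which the two branch values coincide, determined by $a_0 - \log w^* = a_1 - \log(1-w^*)$; the hypothesis $a_0 + a_1 > 0$ guarantees $w^*\in(0,1)$ and that the transition is genuine rather than degenerate. Strict inequality on either side then places $\mu^*$ in the corresponding $\argmin$, yielding the two cases. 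I expect the main obstacle to be the localization identity of the second paragraph, since it is the only place where the geometry enters: it relies on combining the coordinatewise product structure (to force single-sided support) with the absolute continuity of $\pi$ (to discard the separating boundaries as null sets). Once that identity is in hand, the phase transition follows from a one-line comparison, and I would double-check the precise closed form of $w^*$ against the balance equation $a_0 - \log w^* = a_1 - \log(1-w^*)$.
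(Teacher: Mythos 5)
Your argument follows exactly the same route as the paper: the dichotomy of \cref{Lemma:Only-ne}, the identity $\kl{\mu}{\pi} = \kl{\mu}{P_0} - \log(w)$ for $\mu \in \cA_0$ (and its analogue on $\cA_1$), and a comparison of the two branch values $a_0 - \log(w)$ versus $a_1 - \log(1-w)$, where $a_i := \inf_{\mu \in \cP(\R)^{\otimes d}} \kl{\mu}{P_i}$. All of that --- including your more detailed justification that any product measure with finite divergence localizes to a single quadrant, and your observation that the additive constants $-\log(w)$, $-\log(1-w)$ do not alter the within-branch minimizers --- is correct, and it is the same computation the paper performs in the display preceding the corollary.

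The point you flagged at the end and left unresolved, however, is fatal to the statement as printed rather than to your proof. Your balance equation $a_0 - \log(w^*) = a_1 - \log(1-w^*)$ has the unique solution
\begin{equation*}
w^* = \frac{e^{a_0}}{e^{a_0}+e^{a_1}},
\end{equation*}
the logistic function of $a_0 - a_1$, which differs from the ratio $a_0/(a_0+a_1)$ in \eqref{coro:phase-transition} unless $a_0 = a_1$: matching the two would require $\log(a_0/a_1) = a_0 - a_1$, which is not an identity. Concretely, take $a_0 = 2$, $a_1 = 1$, $w = 0.7$. Then \eqref{coro:phase-transition} gives $w^* = 2/3 < 0.7$, so the corollary asserts that $\mu^*$ fits $P_0$; yet the branch values are $2 - \log(0.7) \approx 2.36$ and $1 - \log(0.3) \approx 2.20$, so every global optimizer lies in $\cA_1$ and fits $P_1$. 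Hence no proof can establish the corollary with the threshold defined by \eqref{coro:phase-transition}; the correct threshold is the one your balance equation produces. (The logistic form is also the only definition consistent with the paper's own remark that MFVI has two optimizers at $w = w^*$, and that $w^* = 1/2$ when $P_0, P_1$ are themselves product measures --- in that case $a_0 = a_1 = 0$ and the ratio formula is undefined.) To finish, solve your balance equation explicitly and state the conclusion with $w^* = e^{a_0}/(e^{a_0}+e^{a_1})$; everything else in your write-up stands.
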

\begin{remark}
\begin{enumerate}[label = (\roman*)]
\item When $w= w^*$, it is easy to see that MFVI has at least two solutions --- one in each of the supports.  Moreover, if $P_0,P_1 \in \cP(\R)^{\otimes d}$, then  $w^* = \frac{1}{2}$.
\item \Cref{coro:Sufficient-condition} suggests that MFVI experiences a \emph{phase transition} when approximating the a bimodal mixture.  This seeems to be related to the statistical physics literature on phase transition and the ``spontaneous symmetry breaking" phenomenon. For example, MFVI for the Curie-Weiss model yields two distinct optimizers in the low temperature regime but a unique mean-field MFVI optimizer after the temperature passes a critical threshold, as explained in \cite{Eldan2020}. 
\item The observed phenomenon relates to the bias amplification in machines learning~\cite{bachoc2025majority}. In particular, when the mixing weight is close to 1, \cref{coro:Sufficient-condition} implies that MFVI ignores the minority component $P_1$ entirely.
\end{enumerate}
\end{remark}

\section{Rotational Variational Inference (RoVI)}\label{sec:rotation-vi}
Let $\mu, \nu \in \cP(\R^d)$. 
A measurable map $T: \R^d \to \R^d$ is said to \emph{push forward} $\mu$ to $\nu$, denoted $T_{\#}\mu = \nu$, 
if $\nu(A) = \mu(T^{-1}(A))$ for all Borel sets $A \subseteq \R^d$. 
We call $T$ an \emph{optimal transport (OT) map} from $\mu$ to $\nu$ if it minimizes the quadratic transportation cost
b\begin{equation*}
    T \in \argmin_{S_\# \mu = \nu} \int \|S(x) - x\|_2^2 \, \mu(\rd x).
\end{equation*}
The celebrated theorem  states that, when $\mu$ is absolutely continuous, 
the optimal transport map $T$ is unique and is given by the gradient of a convex potential~\citep[see, e.g.,][]{Brenier,Cuesta-Matran,McCann.Robert.95.Duke,Rachev-Ruschendorf-article}. 
We say that an OT map $T: \R^d \to \R^d$ is \emph{separable} if it decomposes coordinate-wise as 
$T = (T_1, \dots, T_d)$, where each $T_i: \R \to \R$ is a one-dimensional OT map.

In \cref{sec:mode-collapse}, we show that when the two mixture components of $\pi$ are $\vae$-separated for some sufficiently small $\vae>0$, MFVI collapses to one of the components. To tackle this issue, we introduce an ``optimal" change of coordinate to the MFVI problem.

As a motivating example, consider $\pi = \frac{1}{2}\,\cN\left([-m,-m]^\top, I_2
\right) + \frac{1}{2}\,\cN\left([m,m]^\top, I_2
\right)$. Let $O =\tfrac{1}{\sqrt{2}}
\begin{bmatrix}
1 & -1 \\[3pt]
1 &  1
\end{bmatrix}$. Then $\pi =  O_\# \pi_{\text{MF}}$ where $\pi_{\text{MF}}$ is the product measure given by
\[
\pi_{\rm MF} =  \left(\frac{1}{2}\,\cN\left(-\sqrt{2} m, 1
\right) + \frac{1}{2}\,\cN\left(\sqrt{2} m, 1
\right) \right)\otimes \cN(0,1) \in \cP(\R)^{\otimes 2}.
\]
As a result, $\pi$ can be recovered by jointly optimizing over the rotation matrix $O$ and product measure $\pi_{\rm MF}$ (see \Cref{fig:GM-comparison}(c) and \Cref{fig:GM-comparison}(d) for illustrations). 
 Motivated by the above, we consider \emph{rotational variational inference (RoVI)}, which approximates $\pi$ within the set of all rotated product measures $\{O_\sharp \mu \mid O\in\cO(d),\, \mu\in\cP_{2,{\rm ac}}(\R)^{\otimes d}\}$ where $\cO(d)$ is the set of orthogonal matrices in $\R^d$, i.e., any $O \in \cO(d)$ is a $\R^{d \times d}$ matrix satisfying $O^\top = O^{-1}, O^{\top} O = I_d$. Specifically, we solve the following minimization:
\begin{equation} \label{RoVI-minimizers}\tag{RoVI}
\left(\cO^\star, \mu^\star \right) \in \argmin_{O\in \cO(d), \mu \in \cP_{2, {\rm ac}}(\R)^{\otimes d }} \kl{O_\sharp \mu}{\pi}.
\end{equation}
\eqref{RoVI-minimizers} generalizes MFVI:
the optimal RoVI optimizer necessarily achieves a lower KL divergence than the MFVI optimizer.
Remark that $\kl{O_\sharp \mu}{\pi}  = \kl{\mu}{ O^\top_\sharp \pi}$. Therefore, \eqref{RoVI-minimizers} is equivalent to
\begin{equation*}
    \argmin_{O\in \cO(d), \mu \in \cP_{2, {\rm ac}}(\R)^{\otimes d }} \kl{\mu}{O_\sharp^\top \pi}.
\end{equation*}
A concurrent work by \cite{chen2025rotated} approaches the above problem by choosing a fixed rotation matrix $O$ depending on $\pi$ and then apply MFVI to the rotated target measure $O_\sharp \pi$. Building on \cite{lacker2023independent}, they relate the MFVI improvement (relative to a standard Gaussian approximation) to a \emph{projected} Fisher information, motivating the choice of rotation matrix by maximizing a lower-bound for this quantity. In contrast, we address the mode-collapsing phenomenon and propose a different algorithm to solve RoVI by jointly optimizing $O, \mu$, which accommodates the case when the target measure is non-log-concave.

Following the lifting approach introduced in \cite{jiang2025algorithms,Sheng2025Stability}, we construct a polyhedral of optimal transport maps and propose a gradient-based method to compute the MFVI optimizer. Let $\rho =\cN(0,I_d)$.  Our approach to~\eqref{RoVI-minimizers} involves two steps: 
\begin{enumerate}
\item[] \textbf{Step 1:}  Approximate the the set $\cP_{2,{\rm ac}}(\R)^{\otimes d }$ by a parameterized class of pushforward measures $\left\{T_{\theta_\#} \rho \mid \theta \in \Theta \right\}$, where $T_\theta$ is a separable optimal transport map of the form
\begin{equation*}
T_{\theta}(x)
= v + \sum_{T \in \cM} \lambda_{T} T(x),
\end{equation*}
with $\theta = (\lambda, v) \in \Theta:=  \R_+^{|\cQ|} \times \R^d $. The set $\cM$ is a dictionary of univariate OT maps (see \cite[Section~5.3]{jiang2025algorithms} for an explicit construction of $\cM$), and the set $\{T_\theta \mid \theta \in \Theta \}$ is the polyhedral formed by maps in $\cM$.

\item[] \textbf{Step 2:}  Compute the following minimizers by projected gradient descent: 
\begin{equation} \label{RoVI-alg-minimizers}
    \left(O_\diamond^\star,  \theta^\star_\diamond \right) \in  \argmin_{O\in \cO(d),\theta \in \Theta} \kl{ \left(O \circ T_\theta\right)_\# \rho}{\pi}.
\end{equation}
\end{enumerate}
 Finally, a RoVI optimizer takes the form $\left(O_\diamond^\star \circ T_{\theta_\diamond^\star} \right)_\# \rho$.  
 
 We solve the optimization~\eqref{RoVI-alg-minimizers} using the coordinate gradient descent algorithm. To that end, we explicitly compute the gradient of $$\cO(d) \times \Theta \ni (O,\theta) = (O, \lambda, v) \mapsto \kl{ \left(O \circ T_{\lambda, v}\right)_\# \rho}{\pi}.$$
For any $(O, \lambda, v) \in \cO(d) \times \Theta$, denote by $\mu_{\lambda, v} := (T_{\lambda, v})_\sharp \rho$ and $\pi_O := O_\sharp \pi$, we know that
\begin{equation}\label{eq:KL}
   \kl{(O\circ  T_{\lambda, v})_\sharp \rho}{\pi} = \kl{\mu_{\lambda, v}}{\pi_O}= \int V\circ O \dd \mu_{\lambda, v} + H\bigl(\mu_{\lambda,v}\bigr)  + \log Z,
\end{equation}
where we recall that $Z$ is the normalizing constant of $\pi$.
\\
\textbf{MFVI step: update $(\lambda, v)$.} 
Let $Q$ be a positive semi-definite Gram matrix with entries $Q_{T, \tilde{T}}  :=  \langle T, \tilde{T} \rangle_{L_2(\rho)}$ for $T, \tilde T \in \cM$. After setting a large smoothness constant $L$, we project and discretize the following gradient flow for the gradient descent updates with target $\pi_{O_t} \propto \exp \left( - V \circ O_t \right)$:\footnote{See \cref{sec:partial-gradient} for the explicit expressions of the partial gradients of $\kl{\mu_{\lambda_t, v_t}}{\pi_{O_t}}$ w.r.t.\ $(\lambda, v)$. In addition, we assume that $Q$ is positive definite.} 
\begin{equation}\label{GF-MFVI}
  \begin{aligned}
\dot \lambda_t &=  - \tfrac{1}{L} Q^{-1} \nabla_\lambda \kl{\mu_{\lambda_t, v_t}}{\pi_{O_t}}    \\
\dot v_t &= -  \nabla_v \kl{\mu_{\lambda_t, v_t}}{\pi_{O_t}}. 
\end{aligned}  
\end{equation}
The gradient flow enjoys linear convergence when $\pi$ is $L$-log smooth and $\alpha$-log-concave~\cite{lacker2023independent, jiang2025algorithms}. 
\\
\textbf{Rotation step: update $O$.} 
For fixed $\mu = T_\# \rho$.
By \eqref{eq:KL}, the unconstrained gradient of $\kl{\mu}{\pi_{O}}$ w.r.t.\ $O \in \cO(d)$ is
\begin{equation*}
G(O) = \int \nabla V \left( O T(x) \right) \left(T(x) \right)^\top  \rho(\rd x). 
\end{equation*}
The tangent space at $O$ is $T_O\cO(d)= \{O\Omega \in \R^{d\times d}:\ \Omega^\top=-\Omega \}$.
Projecting the unconstrained gradient onto $T_O\cO(d)$ \citep[Eq. 2.3]{Edelman1999} yields ,
\begin{equation*}
\tilde G(O) = G(O) - O \mathrm{sym} \big(O^\top G(O)\big),
\qquad
\mathrm{sym}(A) = \tfrac12(A+A^\top).
\end{equation*}
Thus, the gradient flow for updating $O$ at time $t$ is given by
 \begin{equation} \label{GF-O}
    \dot O_t  = - \tilde G(O_t). 
 \end{equation}
The rotational VI algorithm solves \eqref{RoVI-minimizers} by discretizing the gradient flows specified by \eqref{GF-MFVI} and \eqref{GF-O} with some chosen stepsizes; see \Cref{algo:RoVI} for details.
\begin{algorithm}[h]
\caption{Rotational Variational Inference}
\label{algo:RoVI}
\begin{algorithmic}
\State \textbf{Input:} Dictionary $\cM$, Gram matrix $Q$, a smoothness constant $L > 0$, step sizes $\eta_{\text{MF}}, \eta_O > 0$.
\State \textbf{Initialize:} variational parameters $\theta_0=(\lambda_0,v_0)\in\Theta$, rotation $O_0\in\cO(d)$. 
\For{$t=0,1,2,\dots$}
  \State \textbf{MFVI step:} Update $\theta = (\lambda,v)$ by projected gradients with $\pi_{O_t} \propto \exp \left( - V \circ O_t \right)$:
\begin{equation*}
\begin{aligned}
  \lambda_{t+1} &= \proj_{(\R_+^{|Q|},\|\cdot\|_Q)}\Big(\lambda_t - \tfrac{\eta_{\text{MF}}}{L}\, Q^{-1} \nabla_\lambda \kl{\mu_{\lambda_t, v_t}}{\pi_{O_t}}\Big),\\
  v_{t+1} &= v_t - \tfrac{\eta_{\text{MF}}}{L}\, \nabla_v \kl{\mu_{\lambda_t, v_t}}{\pi_{O_t}}.
\end{aligned}    
\end{equation*}
  \State \textbf{Rotation step:} Compute gradient:
  \[
  G(O_t) = \int \nabla V\!\big(O_tT_{\theta_{t+1}}(x)\big)\,T_{\theta_{t+1}}(x)^\top\,\rho(dx).
  \]
 \State Project to $T_{O_t}\cO(d)$:
  \[
  \tilde G(O_t) = \tfrac{1}{2}G(O_t) - \tfrac{1}{2}O_t \, G(O_t)^\top O_t .
  \]
  \State Perform the gradient update and retract by the QR factorization onto $\cO(d)$:
  \[
  \widetilde O = O_t - \eta_O \tilde G(O_t), 
  \qquad \widetilde O = \tilde Q R, \quad O_{t+1} = \tilde Q.
  \]
\EndFor
\end{algorithmic}
\end{algorithm}

To implement the algorithm, we tune the step size $\eta_{\text{MF}}$ and $\eta_O$. We find that choosing $\eta_{\text{MF}} = 0.001$ and $\eta_O = 0.01$ allows the algorithm to converge stably to a target distirbution.  Since the RoVI problem is non-convex in the Euclidean and Wasserstein sense, we do not always expect the algorithm to converge to the theoretical minimizers $(O_\diamond^\star,  \theta^\star_\diamond)$. The same instability occurs for any MFVI algorithm, since \cref{coro:Sufficient-condition} implies that MFVI optimizers need not be unique.
Instead, we recommend taking multiple random intializations of the algorithm and choose the $O$ and $\theta$ solutions that achieve the lowest KL divergence.

The following proposition characterizes the approximation quality of RoVI for a two-component Gaussian mixture with a common covariance (see \cref{proof:prop:rovi-kl-gaussian-mixture} for the proof). 

\begin{proposition}\label{prop:rovi-kl-gaussian-mixture}
Let $P_0=\cN(m_0,\Sigma)$ and $P_1=\cN(m_1,\Sigma)$ with mean vectors $m_0,m_1\in\R^d$ and covariance matrix $\Sigma\in\R^{d\times d}$. Let $\pi:=wP_0+(1-w)P_1$ for $w \in (0,1)$. Then the following hold:
\begin{enumerate}
    \item If $m_0\neq m_1$, set $v_1=\frac{m_1-m_0}{\|m_1-m_0\|_2}$ and choose $v_2,\ldots,v_d$ to be the $(d-1)$ orthonormal eigenvectors of $(I_d-v_1v_1^\top)\,\Sigma^{-1}\,(I_d-v_1v_1^\top)$. Then  \begin{equation}\label{eq:rovi-kl-gm}
  \inf_{O\in\cO(d),\mu\in\cP(\R)^{\otimes d}}
  \kl{O_\sharp \mu}{\pi}
  \leq
  \frac{1}{2}\left(
    \log(\det(\Sigma))
    +
    \sum_{i=1}^d \log\left(v_i^\top \Sigma^{-1} v_i\right)
  \right).
\end{equation}
\item Otherwise, 
\[
 \inf_{O\in\cO(d),\mu\in\cP(\R)^{\otimes d}}
  \kl{O_\sharp \mu}{\pi} = 0.
\]
\end{enumerate}
\end{proposition}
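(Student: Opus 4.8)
The plan is to prove both parts by exhibiting an explicit feasible pair $(O,\mu)$ and estimating its cost; since \eqref{RoVI-minimizers} is an infimum, any such pair furnishes an upper bound. Part (2) is the easy case: when $m_0=m_1=:m$, the target $\pi=\cN(m,\Sigma)$ is itself Gaussian, so writing the spectral decomposition $\Sigma=U\Lambda U^\top$ with $U\in\cO(d)$ and $\Lambda$ diagonal, I would take $O=U$ and $\mu=\cN(U^\top m,\Lambda)$. The latter is a product measure (diagonal covariance) of finite second moment, and $O_\sharp\mu=\cN(m,\Sigma)=\pi$, so its KL is exactly $0$.

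For $m_0\neq m_1$ the structural idea is to rotate so that the rotated mean difference becomes axis-aligned, which is what allows the associated product measure to be a genuine two-component mixture. With $v_1=(m_1-m_0)/\|m_1-m_0\|_2$ and $v_2,\dots,v_d$ as in the statement, set $O=[v_1\mid\cdots\mid v_d]\in\cO(d)$ and write $\tilde m_j:=O^\top m_j$ and $\tilde\Lambda:=O^\top\Sigma^{-1}O$. Since $O^\top(m_1-m_0)$ is a scalar multiple of the first coordinate axis, the vectors $\tilde m_0,\tilde m_1$ differ only in their first entry. I would then propose the candidate
\[
\mu \;=\; w\,\cN\!\big(\tilde m_0,\,T\big)+(1-w)\,\cN\!\big(\tilde m_1,\,T\big),\qquad T:=\diag\big(1/\tilde\Lambda_{11},\dots,1/\tilde\Lambda_{dd}\big).
\]
Because $\tilde m_0$ and $\tilde m_1$ agree in every coordinate but the first and $T$ is diagonal, this mixture factorizes across coordinates, so it is a bona fide element of $\cP_{2,\mathrm{ac}}(\R)^{\otimes d}$ (only its first marginal is a two-component mixture). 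Pushing forward gives $O_\sharp\mu=w\,\cN(m_0,OTO^\top)+(1-w)\,\cN(m_1,OTO^\top)$, a mixture sharing the weights and component means of $\pi$ but carrying the mean-field covariance $OTO^\top$ in place of $\Sigma$.

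The final step is to estimate $\kl{O_\sharp\mu}{\pi}$ via the joint convexity of the KL divergence: as $O_\sharp\mu$ and $\pi$ are mixtures with identical weights $w,1-w$, the log-sum inequality yields
\[
\kl{O_\sharp\mu}{\pi}\;\le\; w\,\kl{\cN(m_0,OTO^\top)}{\cN(m_0,\Sigma)}+(1-w)\,\kl{\cN(m_1,OTO^\top)}{\cN(m_1,\Sigma)}.
\]
Each summand is a KL between two Gaussians with a common mean, so the mean-dependent term vanishes and both summands reduce to $\tfrac12\big(\tr(\Sigma^{-1}OTO^\top)-d+\log\det\Sigma-\log\det T\big)$. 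Using $\tr(\Sigma^{-1}OTO^\top)=\tr(\tilde\Lambda T)=\sum_i \tilde\Lambda_{ii}\cdot(1/\tilde\Lambda_{ii})=d$ and $\det T=\prod_i 1/\tilde\Lambda_{ii}$, this collapses to $\tfrac12\big(\log\det\Sigma+\sum_i\log\tilde\Lambda_{ii}\big)$; since $\tilde\Lambda_{ii}=v_i^\top\Sigma^{-1}v_i$ for the orthonormal basis $\{v_i\}$, this is precisely the right-hand side of \eqref{eq:rovi-kl-gm}. The mixing weight $w$ cancels because the two Gaussian KL terms are equal and the weights sum to one, which matches the $w$-independence asserted in the statement.

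I expect the only non-mechanical point to be the verification that the candidate $\mu$ is genuinely a product measure: this is exactly what forces $v_1\propto m_1-m_0$, because a two-component Gaussian mixture with common diagonal covariance factorizes if and only if its component means differ in at most one coordinate. The remaining prescription of $v_2,\dots,v_d$ as eigenvectors of $(I_d-v_1v_1^\top)\Sigma^{-1}(I_d-v_1v_1^\top)$ is not needed for the validity of the bound — any orthonormal completion of $v_1$ makes the argument above go through verbatim. By Hadamard's inequality, however, this particular completion is the one minimizing $\sum_{i\geq 2}\log(v_i^\top\Sigma^{-1}v_i)$, hence it yields the sharpest bound of this form, which is presumably why it is singled out.
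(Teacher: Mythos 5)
Your proposal is correct and takes essentially the same route as the paper's proof: rotate so that $m_1-m_0$ is axis-aligned, take the product mixture whose common diagonal covariance is $\left(\diag\left(O^\top\Sigma^{-1}O\right)\right)^{-1}$, and bound the KL divergence by convexity across mixture components together with the closed-form Gaussian KL formula --- exactly the content of \cref{lemma:kl-mixture,lemma:kl-Gaussian} in the paper (with equal weights the paper's mixture lemma reduces to your log-sum bound). Your closing remark, that any orthonormal completion of $v_1$ makes the argument work and that the eigenvector completion is the one minimizing $\sum_{i\geq 2}\log\left(v_i^\top\Sigma^{-1}v_i\right)$ by Hadamard's inequality, is correct and goes slightly beyond what the paper records.
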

When $v_1$ is an eigenvector of $\Sigma$, the bound~\eqref{eq:rovi-kl-gm} reduces to $0$, indicating that the RoVI optimizer exactly recovers the target mixture $\pi$. The upper bound~\eqref{eq:rovi-kl-gm} is explicit and can be evaluated without solving for the RoVI optimizer. 
To compute the bound, it suffices to evaluate the determinant $\det(\Sigma)$, the direction vector $v_1 = (m_1 - m_0)/\|m_1 - m_0\|_2$, and the eigendecomposition of the projected precision matrix  $(I_d - v_1v_1^\top)\,\Sigma^{-1}\,(I_d - v_1v_1^\top)$.  The total computational cost is $O(d^3)$ as determined by the eigendecomposition.
\subsection{Numerical Experiments}
In this numerical study, we provide empirical results to support our theoretical findings. 
Consider a two-component Gaussian mixture with potential
\begin{multline*}
     V(x) = - \log \Bigg(
w  | \Sigma_1 |^{-1/2} \exp\Big( -\tfrac{1}{2} \left\|\Sigma_1^{-1/2} (x - m_1) \right\|_2^2 \Big)
\\
+ (1-w)  | \Sigma_2 |^{-1/2} \exp\Big( -\tfrac{1}{2} \left\|\Sigma_2^{-1/2} (x - m_2) \right\|_2^2  \Big)
\Bigg),
\end{multline*}
where $x \in \R^2$, $m_1, m_2 \in \R^2$ are the mean vectors, and $\Sigma_1, \Sigma_2 \in \R^{2 \times 2}$ are the covariance matrices. The parameter $w \in (0,1)$ specifies the mixing weight of the first Gaussian component. The algorithms to compare are Langevin Monte Carlo (LMC), MFVI, and RoVI.  
In this synthetic study, we compare them across six settings with varying $w, m_1, m_2, \Sigma_1,$ and $\Sigma_2$.  We observe the following phenomena:  
\begin{enumerate}[label = (\roman*)]
  \item When $w = 1$ and $\Sigma_1$ is non-diagonal, MFVI produces an underdispersed Gaussian approximation while RoVI accurately the exact target (\Cref{fig:GM-comparison}(a), the target is $\cN\left([0, 0]^\top,
      \begin{bmatrix}
        1.8 & 1.2 \\[2pt]
        1.2 & 1.0
      \end{bmatrix}\right)$).

  \item When $w \in (0,1)$ and $m_1, m_2$ lie on the same horizontal axis, all three methods (MFVI, RoVI, and LMC) correctly recover the target distribution (\Cref{fig:GM-comparison}(b), the target is $0.4\,\cN\left([-2.5, 0]^\top, I_2\right) + 0.6\,\cN\left([2.5, 0]^\top, I_2\right)$).

  \item When $m_1, m_2$ are misaligned but $\Sigma_1 = \Sigma_2 = I_2$, RoVI successfully captures both modes, whereas MFVI suffers from mode collapse (\Cref{fig:GM-comparison}(c), the target is $0.5\,\cN\left([-2.5, -1.5]^\top, I_2\right) + 0.5\,\cN\left([2.0, 1.0]^\top, I_2\right)$).

  \item When $m_1, m_2$ are far apart with $\Sigma_1 = \Sigma_2 = I_2$, both LMC and RoVI recover the two modes, while MFVI collapses to a single mode (\Cref{fig:GM-comparison}(d), the target is $0.5\,\cN\left([-8, -8]^\top, I_2\right) + 0.5\,\cN\left([8, 8]^\top, I_2\right)$).

  \item When $\Sigma_1 \neq \Sigma_2$, RoVI is robust to the asymmetry in covariances and accurately matches LMC marginals, while MFVI underestimates dispersion (\Cref{fig:GM-comparison}(e)). The target is $$0.5\,\cN\left([-2.0, -2.0]^\top,
  \begin{bmatrix}
  1.8 & -0.6 \\[2pt]
  -0.6 & 1.2
  \end{bmatrix}\right)
  + 0.5\,\cN\left([2.5, 2.0]^\top,
  \begin{bmatrix}
  0.7 & 0 \\[2pt]
  0 & 1.1
  \end{bmatrix}\right).$$

  \item When both means are far apart and covariances are asymmetric, all three methods suffer from mode collapse (\Cref{fig:GM-comparison}(f)). The target is $$0.5\,\cN\left([-5.0, -5.0]^\top,
  \begin{bmatrix}
  2.0 & -0.6 \\[2pt]
  0.6 & 2.0
  \end{bmatrix}\right)
  + 0.5\,\cN\left([2.5, 2.0]^\top,
  \begin{bmatrix}
  0.7 & 0 \\[2pt]
  0 & 1.1
  \end{bmatrix}\right).$$
\end{enumerate}
\begin{figure}[H]
  \centering
  \begin{subfigure}[t]{0.45\linewidth}
    \includegraphics[width=\linewidth, trim=0 0 0 0, clip]{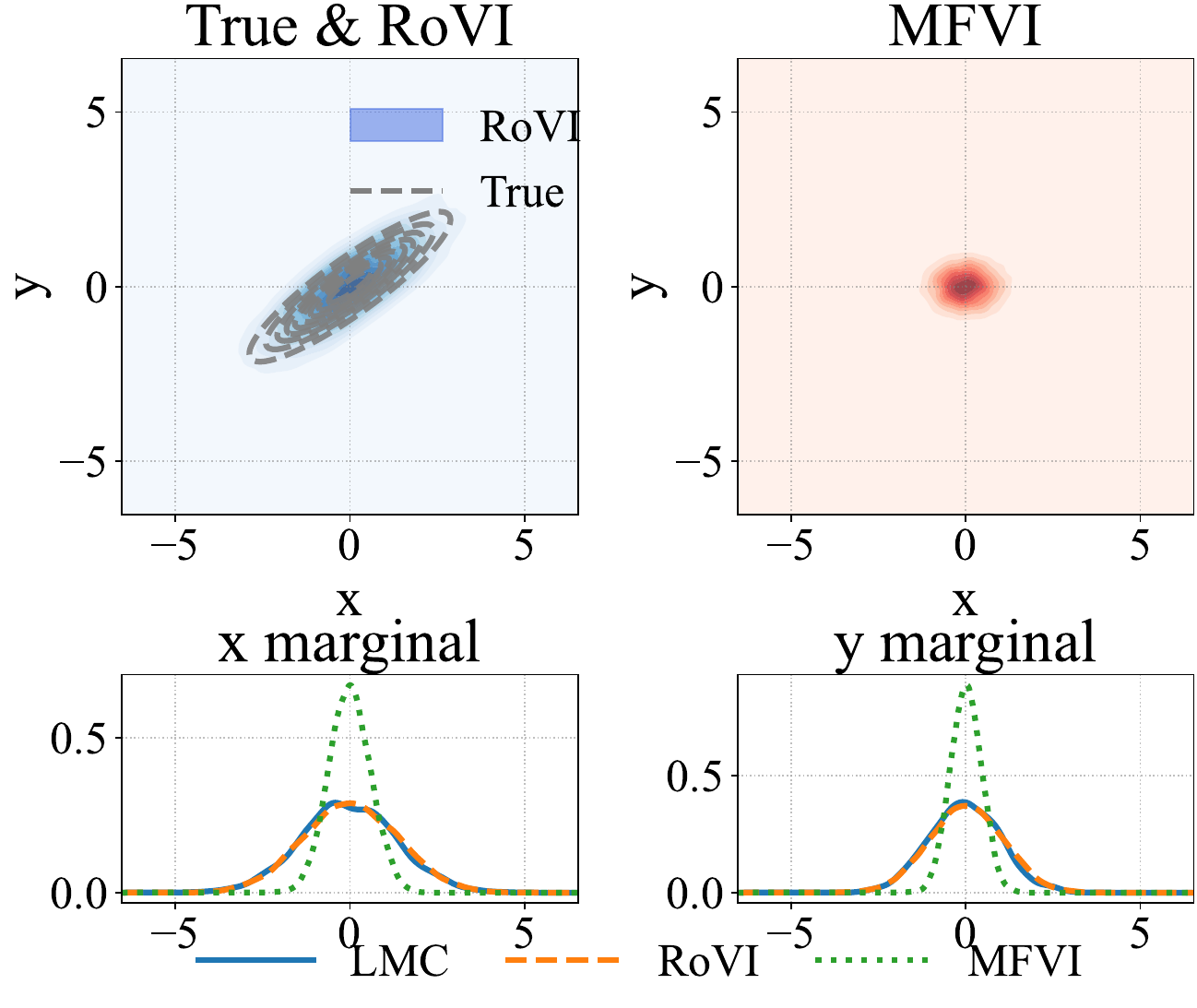}
    \caption{Skewed Gaussian}
    \label{fig:single-Gaussian}
  \end{subfigure}\hspace{0.0em}
  \begin{subfigure}[t]{0.45\linewidth}
    \includegraphics[width=\linewidth, trim=0 0 0 0, clip]{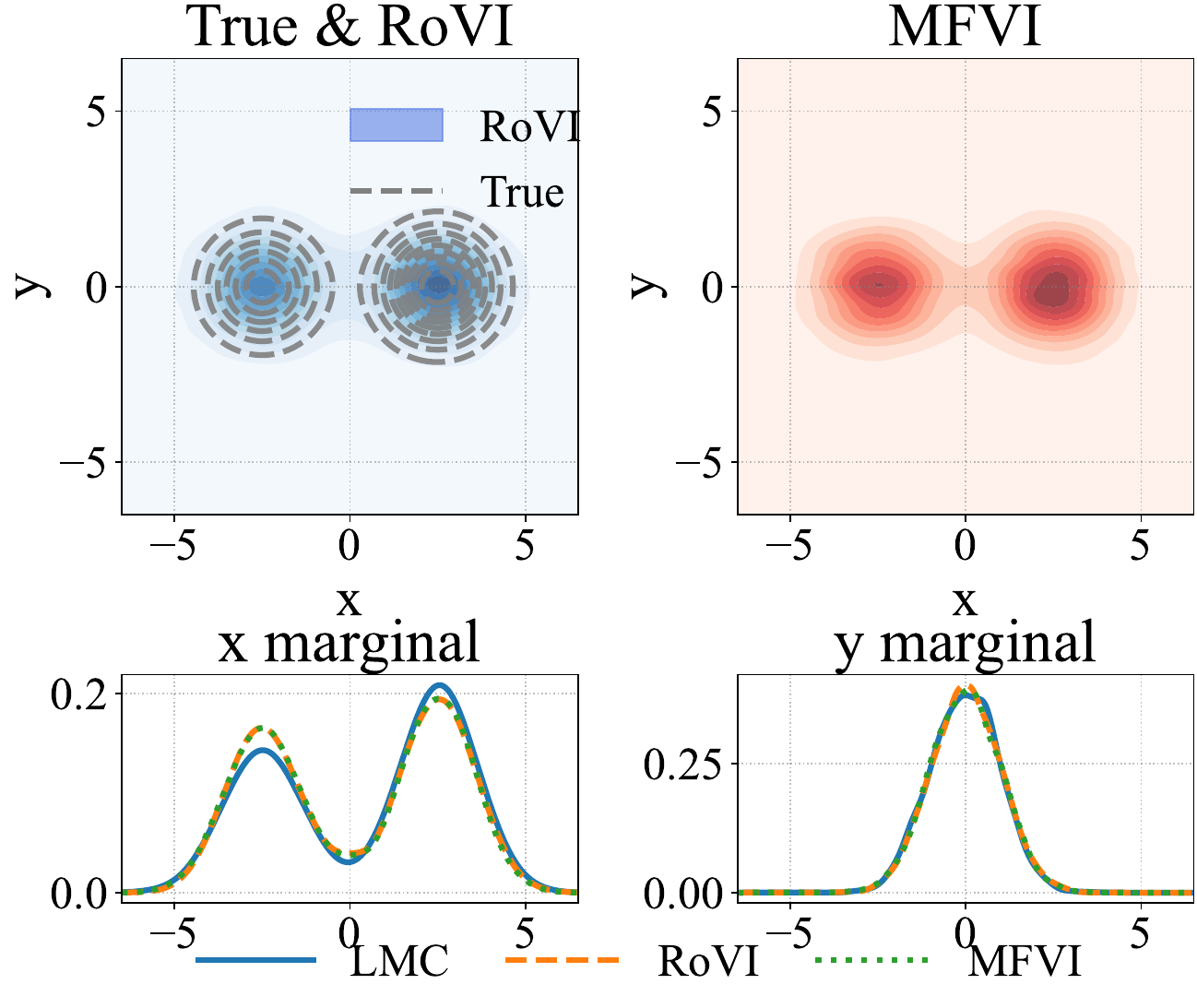}
    \caption{Coordinate-aligned modes}
    \label{fig:aligned-GM}
  \end{subfigure}

  \vspace{0.0em}

  \begin{subfigure}[t]{0.45\linewidth}
    \includegraphics[width=\linewidth, trim=0 0 0 0, clip]{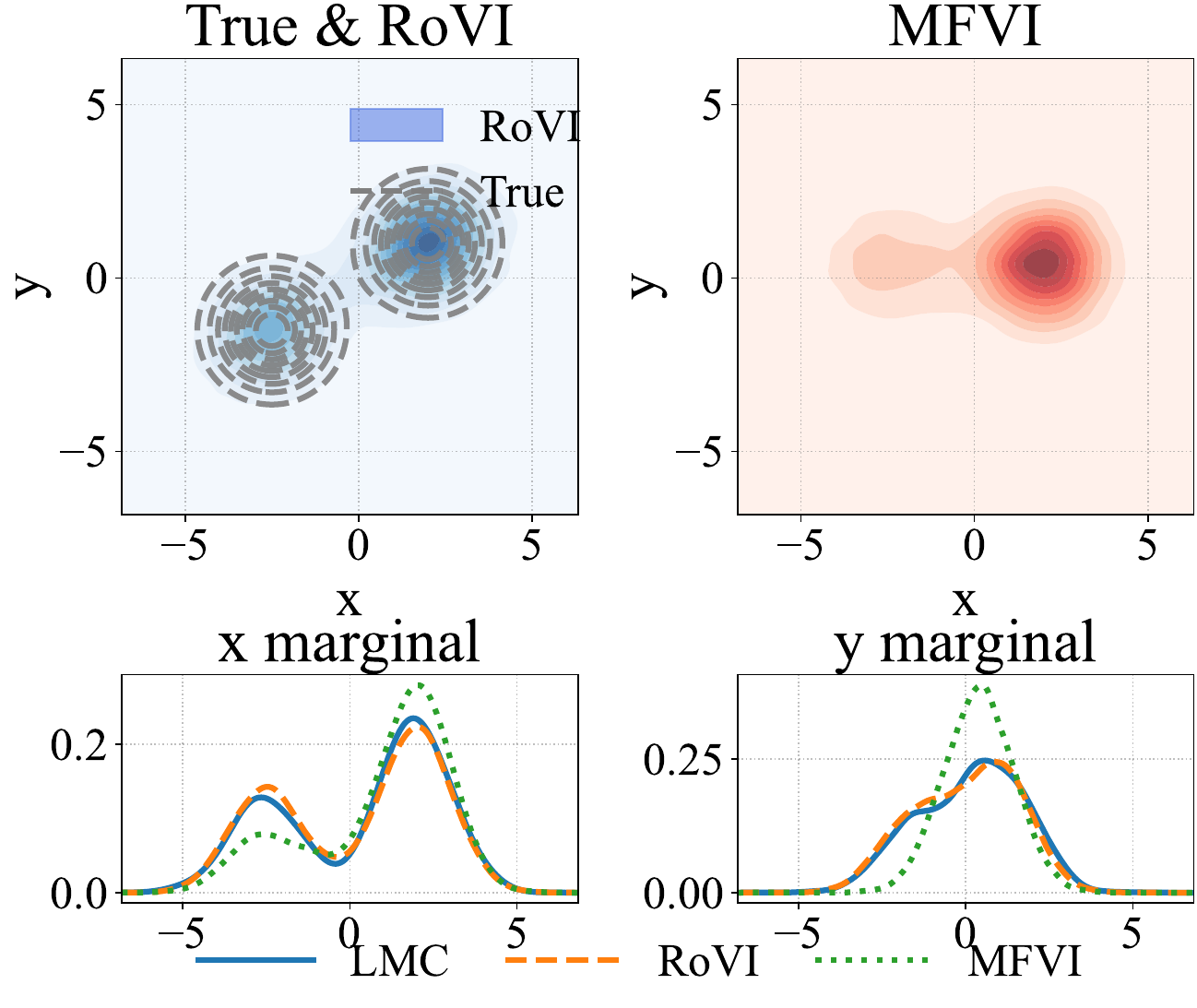}
    \caption{Misaligned modes}
    \label{fig:misaligned-GM}
  \end{subfigure}\hspace{0.0em}
  \begin{subfigure}[t]{0.45\linewidth}
    \includegraphics[width=\linewidth, trim=0 0 0 0, clip]{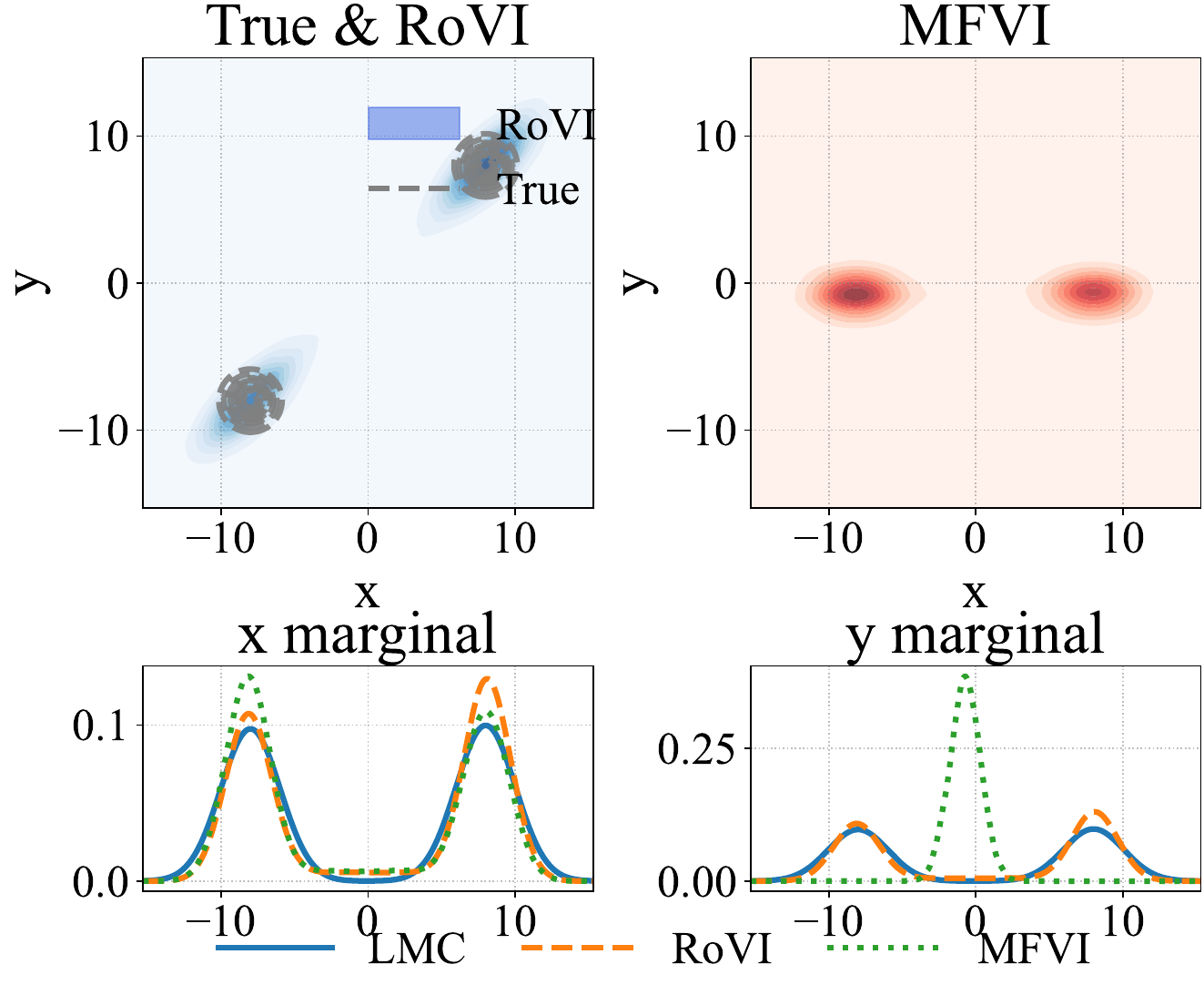}
    \caption{Far-away modes}
    \label{fig:far-GM}
  \end{subfigure}

  \vspace{0.0em}

  \begin{subfigure}[t]{0.45\linewidth}
    \includegraphics[width=\linewidth, trim=0 0 0 0, clip]{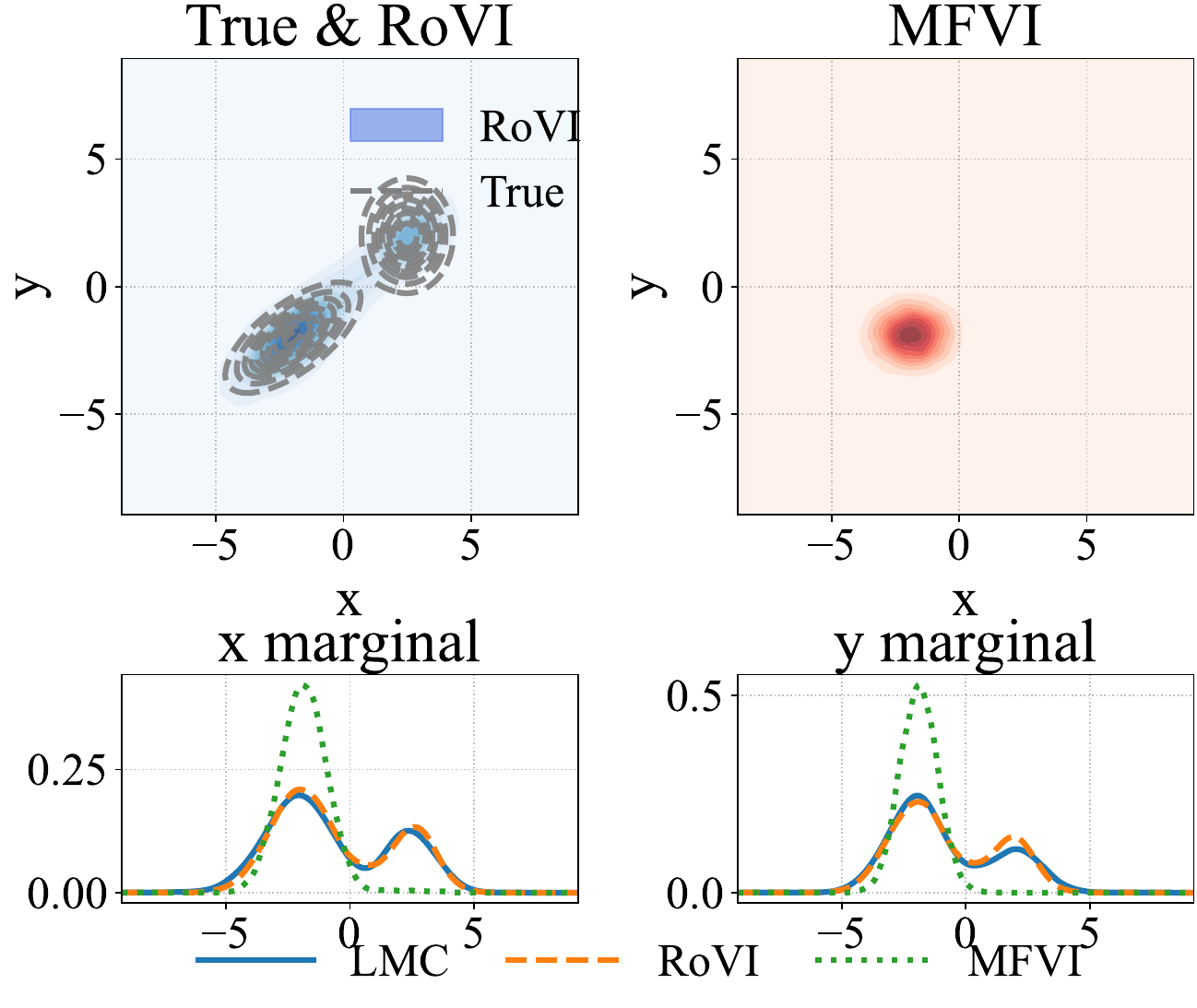}
    \caption{Covariance asymmetry}
    \label{fig:asymmetry-GM}
  \end{subfigure}\hspace{0.0em}
  \begin{subfigure}[t]{0.45\linewidth}
    \includegraphics[width=\linewidth, trim=0 0 0 0, clip]{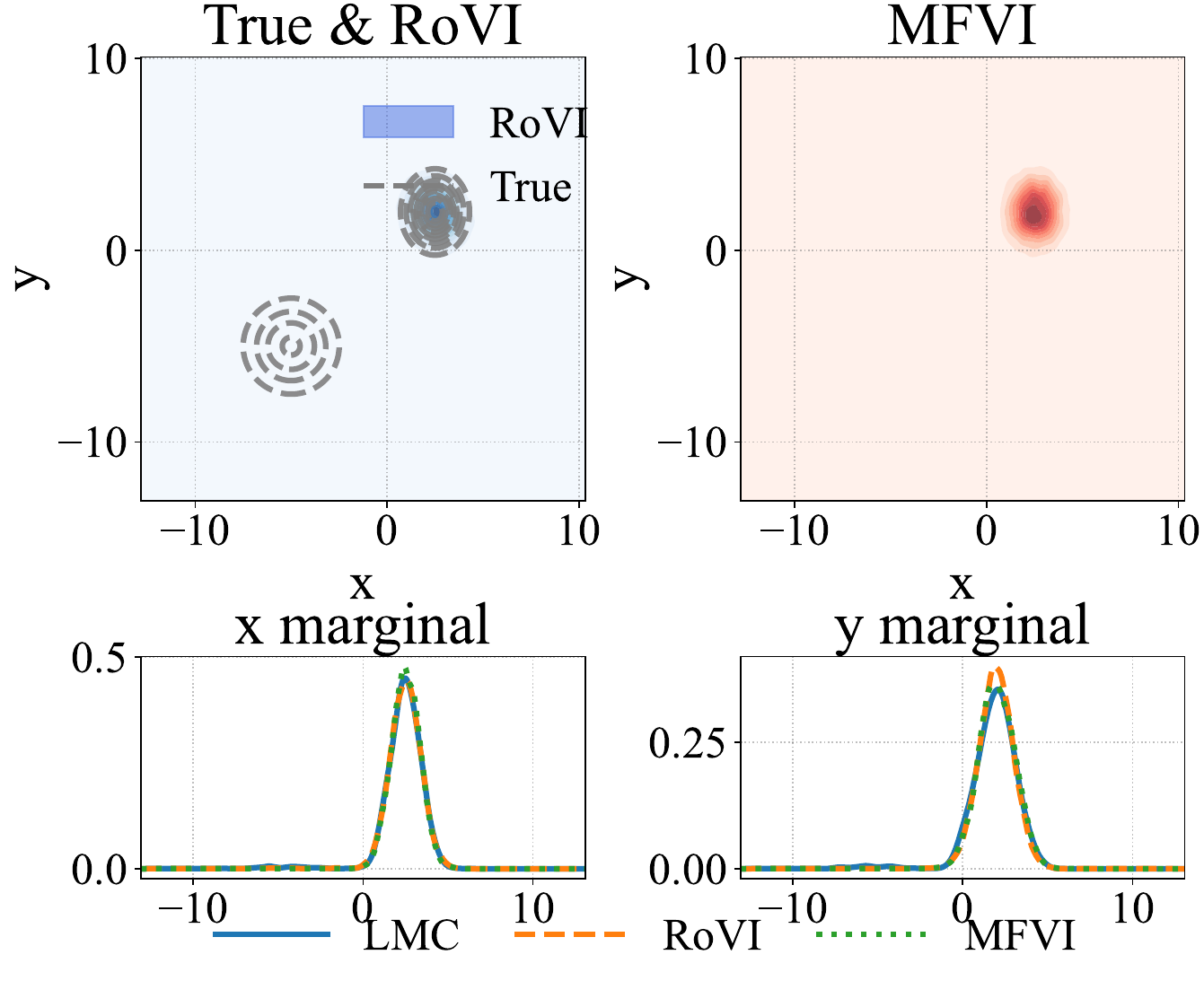}
    \caption{Far-away modes, covariance asymmetry}
    \label{fig:asymmetry-far-GM}
  \end{subfigure}

  \caption{
  Comparison of LMC, MFVI, and RoVI across six Gaussian mixture settings with varying means and covariances. 
  RoVI yields high-quality approximations to the target distribution that closely match the marginal distributions of LMC samples, 
  while MFVI collapses to a single mode under misalignment or strong anisotropy.}
  \label{fig:GM-comparison}
\end{figure}

\section{Discussion} \label{sect-discussion}

In this paper, we developed theoretical results to explain the mode-collapsing phenomenon of mean-field variational inference (MFVI) with respect to bimodal targets, and proposed a solution via \emph{rotational variational inference} (RoVI). Several extensions of this work are of interest to pursue:
\begin{itemize}
    \item \textbf{Mode collapse in sampling:} Variational inference is not alone in suffering from mode collapse. As shown in \Cref{fig:GM-comparison}(f), Langevin Monte Carlo (LMC) could also fail to capture multiple modes due to the non-convexity of the target potential. It is interesting to theoretically characterize mode collapse in gradient-based samplers and investigate whether stochastic gradients could mitigate this issue.

    \item \textbf{Collapse in multi-modal targets:} We aim to extend the theory of mode collapse to more general multi-modal distributions. In such settings, simple linear transformations like rotations may not suffice to recover all modes, instead richer transformations such as a normalizing flow approach might be required.

    \item \textbf{Group-invariant variational inference:} Let $\mathcal{G}$ be a group of transformations acting on $\mathbb{R}^d$. A natural generalization of \eqref{RoVI-minimizers} is to solve a group-augmented variational inference problem:
\begin{equation} \label{group-invariant VI}
            \inf_{g \in \mathcal{G},\ \mu \in \mathcal{P}(\mathbb{R})^{\otimes d}} \kl{g_\# \mu}{\pi}.
\end{equation}
For example, if $\mathcal{G}$ is the permutation group on $[d]$, then every candidate distribution in \eqref{group-invariant VI} satisfies $\mu \circ g = g \circ \mu$.  The choice of $\cG$ depends on the structural properties of the KL divergence, $\pi$ and the mean-field family. 
    
\item \textbf{Theory of RoVI:} A rigorous theoretical analysis of RoVI remains an open challenge. One could aim to study the number of iterations needed for the RoVI algorithm to converge and derive general approximation guarantees under various assumptions on the target.
\end{itemize}

\section{Acknowledgment}
 The authors thank Mathis Deronzier and Lucas De Lara for helpful discussions in formulating the problem, and Yifan Chen for insightful comments.
\bibliography{ref}
\appendix
\section{Proof of Results in \cref{sec:mode-collapse}}\label{sec:proof-of-mode-collapse}
The following lemma is used to prove~\cref{thm:MFVI-concentration}. 
\begin{lemma}\label{lemma:KL-lower-bound}
   Let $\pi\in \cP_{\rm ac}(\R^d)$. Fix $S\subseteq \R^d$. Then
    \[
    \inf_{\mu\in \cP(\R^d), \mu(S)=\delta }\int_S \log \left(\frac{\dd \mu}{\dd \pi} \right) \dd \mu = \delta  \log \left(\frac{\delta}{\pi(S)} \right).
    \]
    Moreover, the minimum is attained if and only if $\mu|_S = \pi|_S$.
\end{lemma}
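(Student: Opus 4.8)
The plan is to reduce the statement to the non-negativity of the Kullback--Leibler divergence (Gibbs' inequality) by passing to the conditional measures on $S$. First I would dispose of the degenerate cases. The objective $\int_S \log(\dd\mu/\dd\pi)\,\dd\mu$ depends on $\mu$ only through its restriction $\mu|_S$, a sub-probability measure of total mass $\mu(S)=\delta$; the mass $\mu$ places on $S^c$ affects neither the objective nor the constraint. The integrand is well defined only when $\mu|_S \ll \pi|_S$; if this fails (or if $\pi(S)=0$ while $\delta>0$) the integral is $+\infty$ and the claimed lower bound holds vacuously, so I may assume $\mu|_S \ll \pi|_S$, and the case $\delta=0$ is handled by the convention $0\log 0 = 0$.

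Next I would introduce the normalized conditionals $\bar\mu := \mu|_S/\delta$ and $\bar\pi := \pi|_S/\pi(S)$, both probability measures supported on $S$. On $S$ the Radon--Nikodym derivative factorizes as $\frac{\dd\mu}{\dd\pi} = \frac{\delta}{\pi(S)}\,\frac{\dd\bar\mu}{\dd\bar\pi}$. Substituting this into the objective and splitting the logarithm gives
\[
\begin{aligned}
\int_S \log\!\left(\frac{\dd\mu}{\dd\pi}\right)\dd\mu
&= \log\!\left(\frac{\delta}{\pi(S)}\right)\mu(S) + \int_S \log\!\left(\frac{\dd\bar\mu}{\dd\bar\pi}\right)\dd\mu \\
&= \delta\log\!\left(\frac{\delta}{\pi(S)}\right) + \delta\,\kl{\bar\mu}{\bar\pi},
\end{aligned}
\]
where in the last step I used $\mu(S)=\delta$ together with $\dd\mu = \delta\,\dd\bar\mu$ on $S$.

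Finally, Gibbs' inequality gives $\kl{\bar\mu}{\bar\pi}\ge 0$, with equality if and only if $\bar\mu=\bar\pi$; since $\delta>0$, the displayed quantity is therefore bounded below by $\delta\log(\delta/\pi(S))$, and this value is attained precisely when $\bar\mu=\bar\pi$, i.e.\ when the conditional of $\mu$ on $S$ coincides with that of $\pi$ (equivalently $\mu|_S = \tfrac{\delta}{\pi(S)}\pi|_S$), which is the stated condition $\mu|_S=\pi|_S$; existence of such a minimizer follows by completing $\mu$ with any probability measure of mass $1-\delta$ on $S^c$. The computation itself is short, so the only real care lies in the reduction step: keeping the bookkeeping of restriction-versus-conditioning straight and handling the absolute-continuity and degenerate cases so that both the factorization of $\dd\mu/\dd\pi$ and the invocation of Gibbs' inequality are legitimate. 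An equivalent route, should one prefer to avoid naming $\bar\mu,\bar\pi$, is to set $\phi = \dd\mu/\dd\pi$ and apply Jensen's inequality to the strictly convex map $t\mapsto t\log t$ against the probability measure $\bar\pi$, which yields the same bound and the same equality condition.
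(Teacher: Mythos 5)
Your proof is correct and follows essentially the same route as the paper's: both normalize the restriction of $\mu$ to $S$ and reduce the claim to Jensen's inequality, your invocation of Gibbs' inequality for $\kl{\bar\mu}{\bar\pi}$ being exactly Jensen applied to the normalized measures. The only cosmetic differences are that you also normalize $\pi|_S$ (the paper normalizes only $\mu$ and applies Jensen to $-\log$ directly), and you handle the degenerate cases ($\delta=0$, $\pi(S)=0$, failure of absolute continuity) more explicitly.
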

\begin{proof}
    Without loss of generality, we may restrict $\mu$ to $\cP_{\rm ac}(\R^d)$. We also use the same letter to denote the measure and its density for brevity. Define $\mu_S = \frac{1}{\delta}\mu{\rm 1}_S$, then $\mu_S\in \cP_{\rm ac}(\R^d)$ and
\begin{equation*}
        \int_S \log \left(\frac{\mu}{\pi} \right)\dd \mu = \delta \log (\delta) + \delta \int \log \left(\frac{\mu_S}{\pi} \right) \dd \mu_S.
\end{equation*}
    By Jensen's inequality, we know that 
\begin{equation*}
     \int \log \left(\frac{\mu_S}{\pi}\right) \dd \mu_S = - \int \log\left( \frac{\pi}{\mu_S} \right) \dd \mu_S \geq -\log \left( \int \frac{\pi}{\mu_S} \dd \mu_S \right)  = -\log \left(\pi(S) \right).
\end{equation*}
    Combining with the above display shows that the minimum is achieved when $\mu|_S = \pi|_S$.
\end{proof}
\begin{proof}[Proof of \cref{thm:MFVI-concentration}]\label{proof-thm-MFVI-concentration}
    Without loss of generality, we assume that $w\in (0,1/2]$ and $P_0,P_1$ are $\vae$-separated by the hyperplanes $H_1^\pm,H_2^\pm$ defined by $s_1,s_2, b_1,b_2\in \R$ with $s_1= s_2 =1$. Denote by 
\[
\mu_i^*((-\infty,b_i]) = a_i^-,\,\, \mu_i^*((b_i,+\infty) )= a_i^+, \quad i\in \{1,2\},
\]
then $a_i^+ + a_i^- = 1$ and
\[
\mu^*(H_1^{+}\cap H_2^{+}) = a_1^+a_2^+\quad{\rm and} \quad \mu^*(H_1^{-}\cap H_2^{-}) = a_1^-a_2^-.
\]
For any $c\in (0,\frac{1}{4}]$ such that $a_1^+ a_2^+, a_1^- a_2^- \geq c$, we have 
\[
1= a_1^+ + a_1^- \geq c\left(\frac{1}{a_2^+} + \frac{1}{a_2^-} \right)= c\left(\frac{a_2^++a_2^-}{a_2^+ a_2^-} \right).
\]
As $a_2^- = 1- a_2^+$, we get 
$
a_2^+(1-a_2^+)\geq c.
$
Solving the above inequality and by symmetry, we know that
\[
a_1^\pm, a_2^\pm \in \left[\frac{1-\sqrt{1-4c}}{2}, \frac{1+\sqrt{1-4c}}{2}\right].
\]
Therefore, defining $c_0 = \frac{1-\sqrt{1-4c}}{2} \in [0,1/2]$, it follows that
\begin{equation}\label{eq:optimizer-lower-bound}
  \mu^*(H_1^{+}\cap H_2^{-}) = a_1^+a_2^- \geq c_0^2\quad {\rm and} \quad \mu^*(H_1^{-}\cap H_2^{+}) = a_1^-a_2^+ \geq c_0^2.  
\end{equation}
Define $S = \left(H_1^{+}\cap H_2^{+}\right) \cup \left(H_1^{-}\cap H_2^{-}\right)$, $\eta:=\pi(S^c)\leq \vae$ and $\delta:=\mu^*(S^c)$. As $P_0,P_1$ are $\vae$-separated, $\eta\leq \vae$. Furthermore,   \eqref{eq:optimizer-lower-bound} implies (recall that the boundaries of the half-spaces are negligible wrt $ \mu^*\ll \pi$) 
\begin{align}
\begin{split}
    \label{eq:bound-delta}
 \delta &= \mu^*\left( \left(H_1^{-}\cup H_2^{-}\right) \cap \left(H_1^{+}\cup H_2^{+}\right)\right)\\
    &= \mu^*\left( \left(H_1^{-}\cap H_2^{+}\right) \cup  \left(H_1^{+}\cap H_2^{-}\right)\right)\\
    &= \mu^* \left(H_1^{-}\cap H_2^{+}\right) +\mu^*  \left(H_1^{+}\cap H_2^{-}\right)\geq 2c_0^2. 
\end{split}
\end{align} 
By \cref{lemma:KL-lower-bound}, we get
\begin{align*}
    \kl{\mu^*}{\pi} &= \int_{S} \log \left(\frac{\dd \mu^*}{\dd \pi} \right)\dd \mu^* +  \int_{S^c} \log \left(\frac{\dd \mu^*}{\dd \pi} \right)\dd \mu^*\\
    & > (1-\delta)\log\left(\frac{1-\delta}{1-\eta} \right) +\delta  \log \left(\frac{\delta}{\eta} \right).
\end{align*}
As the function  $[0,1]\ni x\mapsto a\log \left(\frac{a}{x} \right)$ for $a>0$ is decreasing and $\eta \in [0,\vae]$, it follows that 
$$ (1-\delta)\log\left(\frac{1-\delta}{1-\eta} \right) +\delta  \log \left(\frac{\delta}{\eta} \right) \geq (1-\delta)\log(1-\delta) + \delta \log\left( \frac{\delta}{\vae} \right).$$
Furthermore, since $  x\log( x ) + (1-x)\log(1-x)\geq  -\log(2) $ holds for all $x\in (0,1),$  we get 
\begin{align*}
  (1-\delta)\log(1-\delta) + \delta \log\left( \frac{\delta}{\vae} \right)&=-\log(2)+ \delta \log \left(\vae^{-1} \right) \geq -\log(2)+ (2c_0^2)\log \left(\vae^{-1} \right),
\end{align*}
where we used 
 $\delta \in [2c_0^2,1]$ and $\vae\in (0,1)$. Hence, we have shown that 
 \begin{equation}\label{proof-modes-collapes-KL}
      \kl{\mu^*}{\pi}\geq- \log(2) +(2c_0^2)\log \left(\vae^{-1} \right) .
 \end{equation}
Recall that $ b:=\log(2)  + \inf_{\mu \in \cP(\R)^{\otimes d}}\kl{\mu}{\pi}$ and $\vae \leq e^{-2b}$. Now we argue by contradiction. Assume that
\[
c>\sqrt{\frac{b}{2\log (\vae^{-1})}} - \frac{b}{2\log (\vae^{-1})} = \frac{1-\left(1-\sqrt{\frac{2b}{\log (\vae^{-1})}}\right)^2}{4}.
\]
Then $\frac{1}{2}\geq c_0> \sqrt{\frac{b}{2\log (\vae^{-1})}}$,  so that \eqref{proof-modes-collapes-KL} yields
\[
\kl{\mu^*}{\pi}> - \log(2)  +(2c_0^2)\log \left(\vae^{-1} \right) \geq b- \log(2) =\inf_{\mu \in \cP(\R)^{\otimes d}}\kl{\mu}{\pi},
\]
which contradicts the fact that $\mu^*$ is optimal. 
\end{proof}
\cref{lemma:kl-mixture-bound} upper bounds the KL divergence between any component of the mixture and the mixture distribution and is used to establish \cref{coro:Gaussian-m}.
 \begin{lemma}\label{lemma:kl-mixture-bound}
 Let $\mu,\nu \in \cP_{\rm ac}(\R^d)$ and $w\in (0,1)$. Define $\pi = w\mu+(1-w)\nu$, then
 \[
 0\leq \kl{\mu}{\pi} \leq \min\left\{-\log(w), -\log (1-w)\right\}.
 \]
 The same result holds for $ \kl{\nu}{\pi}$.
\end{lemma}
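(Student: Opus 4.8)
The plan is to derive the estimate from the elementary pointwise domination enjoyed by any mixture, treating the lower and upper bounds separately. The lower bound $\kl{\mu}{\pi}\ge 0$ is just Gibbs' inequality (equivalently, Jensen's inequality applied to the convex map $-\log$), so it requires no computation. For the upper bound I would establish the two competing estimates $-\log w$ and $-\log(1-w)$ and then conclude by taking their minimum; the symmetric claim for $\kl{\nu}{\pi}$ will follow at the end by exchanging the roles $(\mu,w)\leftrightarrow(\nu,1-w)$.

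The first estimate is immediate. Since $\pi=w\mu+(1-w)\nu$ and the density $(1-w)\nu$ is nonnegative, we have $\pi\ge w\mu$ $\cL_d$-almost everywhere, hence $\frac{\rd\mu}{\rd\pi}\le w^{-1}$ $\mu$-almost everywhere. Taking logarithms and integrating against $\mu$ gives $\kl{\mu}{\pi}=\int\log\frac{\rd\mu}{\rd\pi}\,\rd\mu\le -\log w$. Applying the identical argument with the domination $\pi\ge(1-w)\nu$ yields $\kl{\nu}{\pi}\le-\log(1-w)$, which is the companion bound needed for the final statement about $\nu$.

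The delicate point --- which I expect to be the main obstacle --- is the cross estimate $\kl{\mu}{\pi}\le-\log(1-w)$ required to sharpen the bound on $\kl{\mu}{\pi}$ from $-\log w$ to the full minimum. This cannot be read off the one-sided domination used above, since $\pi\ge(1-w)\nu$ controls $\kl{\nu}{\pi}$ rather than $\kl{\mu}{\pi}$. To attack it I would pass to the likelihood ratio $r:=\rd\mu/\rd\nu$ and use the representation $\frac{\rd\mu}{\rd\pi}=\frac{r}{wr+(1-w)}$, so that $\kl{\mu}{\pi}=\EE_\mu\!\left[\log\frac{r}{wr+(1-w)}\right]$, and then study the scalar profile $t\mapsto\log\frac{t}{wt+(1-w)}$ on $(0,\infty)$ under the normalization $\int r\,\rd\nu=1$. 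The crux is the regime $r\gg 1$, where this integrand increases to its supremum $-\log w$; controlling the contribution of that regime --- and thereby isolating the precise condition on $\mu,\nu$ under which the tighter value $-\log(1-w)$ governs $\kl{\mu}{\pi}$ --- is where the real work lies. Once this single-component minimum is secured, the statement for $\kl{\nu}{\pi}$ follows by the same exchange of roles.
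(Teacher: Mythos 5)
Your two one-sided estimates are correct, and they coincide in substance with the paper's own computation: the paper's rearrangement
\begin{equation*}
\kl{\mu}{\pi} \;=\; -\log(w) \;-\; \int \log\left(1+\frac{1-w}{w}\,\frac{\nu}{\mu}\right)\dd\mu \;\leq\; -\log(w)
\end{equation*}
is exactly your pointwise domination $\pi \geq w\mu$, and the symmetric version gives $\kl{\nu}{\pi}\leq-\log(1-w)$. But the ``delicate point'' you isolated --- the cross estimate $\kl{\mu}{\pi}\leq-\log(1-w)$ --- is not merely hard: it is false, so no amount of work on the regime $r\gg 1$ can close the argument. Your own scalar reduction shows why (note, incidentally, that the representation $\frac{\rd\mu}{\rd\pi}=\frac{r}{wr+(1-w)}$ presumes $\mu\ll\nu$; in general one should work with Lebesgue densities, which changes nothing essential). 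The profile $t\mapsto\log\frac{t}{wt+(1-w)}$ increases to its supremum $-\log(w)$, and nothing in the hypotheses prevents the likelihood ratio from being uniformly huge on the bulk of $\mu$. Concretely, take $w<1/2$ and $\mu,\nu\in\cP_{\rm ac}(\R^d)$ with densities of disjoint supports; then $\pi=w\mu$ holds $\mu$-a.e., so
\begin{equation*}
\kl{\mu}{\pi}=\int\log\left(\frac{\mu}{w\mu}\right)\dd\mu=-\log(w)\;>\;-\log(1-w)=\min\left\{-\log(w),-\log(1-w)\right\},
\end{equation*}
and two far-separated Gaussians give the same conclusion up to a vanishing error. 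So the condition on $\mu,\nu$ you suspected would be needed for the tighter value to govern is real: the minimum bound for a single fixed component requires overlap assumptions, not a cleverer estimate.

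The comparison with the paper is therefore unusual: the paper's proof commits precisely the step you declined to take. After correctly deriving $\kl{\mu}{\pi}\leq-\log(w)$ it asserts ``similarly, $\kl{\mu}{\pi}\leq-\log(1-w)$,'' but factoring out $(1-w)$ instead of $w$ leaves $-\log(1-w)-\int\log\left(\frac{w}{1-w}+\frac{\nu}{\mu}\right)\dd\mu$, whose integrand can be negative when $w<1/2$ and $\nu/\mu$ is small; the ``similar'' derivation is only valid with the roles of the measures exchanged, yielding $\kl{\nu}{\pi}\leq-\log(1-w)$. The provable statement is the pair $\kl{\mu}{\pi}\leq-\log(w)$ and $\kl{\nu}{\pi}\leq-\log(1-w)$, equivalently $\min\left\{\kl{\mu}{\pi},\kl{\nu}{\pi}\right\}\leq\min\left\{-\log(w),-\log(1-w)\right\}$ --- which your first two paragraphs already establish. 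This weakened form suffices for the only place \cref{lemma:kl-mixture-bound} is invoked: in the proof of \cref{coro:Gaussian-m} both components $P_0,P_1$ are themselves product measures, so $b\leq\log(2)+\min\left\{\kl{P_0}{\pi},\kl{P_1}{\pi}\right\}\leq\log(2)+\min\left\{-\log(w),-\log(1-w)\right\}$ goes through with the min taken over which component serves as the candidate, rather than over the two bounds for one component. In short: stop where you stopped, record the disjoint-support counterexample, and restate the lemma accordingly.
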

\begin{proof}
Clearly, $\mu\ll \pi$. As $\log(1+x)\geq 0$ for all $x\geq 0$, we have
    \begin{align*}
        \kl{\mu}{\pi} =& \int \log \left(\frac{\mu}{w\mu+(1-w)\nu} \right)\dd \mu = -\int \log \left(w+ (1-w)\frac{\nu}{\mu}\right)\dd \mu\\
        &=-\log(w) - \int \log\left(1+ \left(\frac{1-w}{w}\right)\frac{\nu}{\mu}\right)\dd \mu\\
        &\leq -\log(w).
    \end{align*}
Similarly, we can derive that $ \kl{\mu}{\pi} \leq - \log (1-w)$.
\end{proof}

\begin{proof}[Proof of \cref{coro:Gaussian-m}]\label{proof-coro-Gaussian-m}
By exchanging the order and the orientation of the elements of the canonical basis, we can assume that $j=1$, $k=2$, $m_1>0$ and $m_2>0$. %
 Since  $P_0,P_1\in \cP(\R)^{\otimes d}$, we have
$$ P_0(H_1^{-} \cap H_2^{-}) = P_0( H_1^{-} )  P_0( H_2^{-} )= \Phi(m_1)\Phi(m_2)$$
and
$$ P_1(H_1^{-} \cap H_2^{-}) = P_1( H_1^{-} )  P_1( H_2^{-} ) =\Phi(m_1)\Phi(m_2) ,   $$
which in turn  implies that  $P_0$ and $P_1$ are $\vae_m$-separated by $H_1^{-} = \{x\in \R^d:x_1<0\}$ and $H_2^{-} = \{x\in \R^d:x_2<0\}$ with $\vae_m = 1- \Phi(m_1)\Phi(m_2)$. %
By  \cref{thm:MFVI-concentration}, if
$$1- \Phi(m_1)\Phi(m_2)= \vae_m \leq \min \left\{e^{-2b}, e^{-\frac{2b}{(1-\sqrt{1-4\delta})^2}}\right\} = e^{-\frac{2b}{(1-\sqrt{1-4\delta})^2}},$$
then~\eqref{eq:Gaussian-result} holds. 
 Furthermore, \cref{lemma:kl-mixture-bound} yields that $$b\leq \log(2) + \min\{-\log (w), -\log (1-w)\}, $$ 
so that the second claim follows from the fact that $x\mapsto 1- e^{-\frac{2x}{(1-\sqrt{1-4\delta})^2}}$ is increasing. 
 
\end{proof}

\section{Proof and Details of Results in \cref{sec:rotation-vi}}\label{sec:partial-gradient}
Fix $O\in \cO(d)$, we compute the partial gradients of $(\lambda, v) \mapsto \kl{\mu_{\lambda, v}}{\pi_O}$. Recall that
\begin{align*}
 \kl{\mu_{\lambda, v}}{\pi_O}&= \int V\circ O \circ T_\theta \dd \rho + H\bigl(\mu_{\lambda,v}\bigr)  + \log (Z)\\
   & =  \int -\log \left(\det \left( \sum_{T \in \cM'} \lambda_T D T(x) \right)\right) \rho(\rd x) \\
        &\qquad  \qquad + \int V\circ O\left(\sum_{T \in \cM'} \lambda_T T(x) + v\right) \rho(\rd x) + \textsc{const}. 
\end{align*}
The partial gradients of $\kl{\mu_{\lambda, v}}{\pi_O}$ w.r.t.$(\lambda, v)$ are computed as:
\begin{align*}
    \partial_v  \kl{\mu_{\lambda, v}}{\pi_O}  = \int O^\top \left(\nabla V\circ O\right)\left(\sum_{T \in \cM'} \lambda_T T(x) + v\right) \rho(\rd x), 
\end{align*}
and for any $T\in \cM$, 
\begin{multline*}
     \partial_{\lambda_T} \kl{\mu_{\lambda, v}}{\pi_O} = -\int \Bigg[\tr \left( \left(\sum_{T \in \cM'} \lambda_T D T(x) \right)^{-1} D T(x) \right)   \\
     +\left\langle O T(x),  (\nabla V \circ O)\left(\sum_{T \in \cM'} \lambda_T T(x) + v\right) \right\rangle  \Bigg] \rho(\rd x). 
\end{multline*}

\begin{lemma}\label{lemma:kl-mixture}
    Let $\mu = \sum_{i=1}^n w_i \mu_i$ and $\nu = \sum_{i=1}^n \eta_i \nu_i$ where $\mu_i,\nu_i\in \cP(\R^d)$, $\sum_{i=1}^n w_i = 1$, $\sum_{i=1}^n \eta_i = 1$, $w_i,\eta_i \geq 0$, then
    \[
    \kl{\mu}{\nu} \leq \kl{w}{\eta} + \sum_{i=1}w_i \kl{\mu_i}{\nu_i}.
    \]
\end{lemma}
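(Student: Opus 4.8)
The plan is to recognize Lemma~\ref{lemma:kl-mixture} as an instance of the chain rule for relative entropy combined with the data-processing inequality, by lifting both mixtures to an enlarged space that records which component each point came from. Concretely, on the product space $\{1,\dots,n\}\times\R^d$ I would introduce the two joint probability measures
\[
P(\{i\}\times A) = w_i\,\mu_i(A), \qquad Q(\{i\}\times A) = \eta_i\,\nu_i(A),
\]
for Borel $A\subseteq\R^d$. Their marginals over the index coordinate are $w$ and $\eta$, and the conditional laws of the second coordinate given index $i$ are $\mu_i$ and $\nu_i$. The chain rule for KL divergence then gives exactly
\[
\kl{P}{Q} = \kl{w}{\eta} + \sum_{i=1}^n w_i\,\kl{\mu_i}{\nu_i},
\]
which is precisely the right-hand side of the claim.

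Next I would observe that the projection $p:(i,x)\mapsto x$ pushes $P$ forward to $\mu=\sum_i w_i\mu_i$ and $Q$ forward to $\nu=\sum_i \eta_i\nu_i$. Since relative entropy can only decrease under a common measurable map (the data-processing inequality, itself a consequence of Jensen applied to the convex function $t\mapsto t\log t$), this gives
\[
\kl{\mu}{\nu} = \kl{p_\# P}{p_\# Q} \le \kl{P}{Q}.
\]
Chaining the two displays yields the stated bound. An equivalent, coordinate-level route is the log-sum inequality: fixing a common $\sigma$-finite dominating measure (e.g.\ $\sum_i(\mu_i+\nu_i)$) and writing all densities against it, the log-sum inequality applied pointwise with $a_i=w_i\mu_i(x)$ and $b_i=\eta_i\nu_i(x)$ gives $\mu(x)\log\frac{\mu(x)}{\nu(x)}\le \sum_i w_i\mu_i(x)\log\frac{w_i\mu_i(x)}{\eta_i\nu_i(x)}$; integrating, splitting the logarithm as $\log\frac{w_i}{\eta_i}+\log\frac{\mu_i(x)}{\nu_i(x)}$, and using $\int\mu_i=1$ recovers the two terms on the right.

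The two named inequalities are standard, so the routine analytic content is minimal. The main point requiring care is the degenerate and infinite cases, which I would dispatch at the outset: if $\mu\not\ll\nu$, then there is a set $A$ with $\nu(A)=0<\mu(A)$, forcing some index $i$ with $w_i>0$ and $\mu_i(A)>0$ while $\eta_i\nu_i(A)=0$; this means either $\eta_i=0$, whence $\kl{w}{\eta}=\infty$, or $\nu_i(A)=0$ with $\mu_i\not\ll\nu_i$, whence $\kl{\mu_i}{\nu_i}=\infty$. In either case the right-hand side is $+\infty$ and the inequality holds trivially; otherwise all quantities are finite and the argument above applies verbatim. This bookkeeping, rather than any genuine analytic obstacle, is the only subtlety, and it is also what lets us apply the chain rule and data-processing steps without integrability worries.
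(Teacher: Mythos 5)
Your proposal is correct and follows essentially the same route as the paper: lifting both mixtures to joint measures on $[n]\times\R^d$, applying the chain rule for the KL divergence to get equality with the right-hand side, and then the data-processing inequality for the projection onto $\R^d$. The additional log-sum-inequality variant and the treatment of the degenerate cases are fine but not needed beyond what the paper's argument already covers.
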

\begin{proof}
Define the measure $\mu_{XZ}\in\cP(\R^d\times[n])$ as
\begin{equation*}
    \mu_{XZ}(A\times\{z\})= \mu_Z(\{z\})\,\mu_{X\mid Z}(A,z)
 \defeq w_{z}\,\mu_z(A),
\qquad \forall\,A\in\cB(\R^d),\ z\in[n].
\end{equation*}
Define $\nu_{XZ}$ similarly. We note that $\mu$ is the first marginal of $\mu_{XZ}$. By the chain rule for the KL divergence \cite[Theorem~2.5.3]{Cover2006} and the data processing inequality \cite[Lemma~1.6]{nutz2021introduction}, we conclude that
\begin{align*}
\kl{\mu}{\nu}
\leq \kl{\mu_{XZ}}{\nu_{XZ}}
&= \kl{\mu_Z}{\nu_Z}
  + \int \kl{\mu_{X\mid Z}(\cdot,z)}{\nu_{X\mid Z}(\cdot,z)}\,\mu_Z(\rd z) \\
&= \kl{w}{\eta} + \sum_{i=1}^n w_i\,\kl{\mu_i}{\nu_i}.
\end{align*}

\end{proof}

The following result follows from a direct computation of the KL divergence between two Gaussians.
\begin{lemma}\label{lemma:kl-Gaussian}
    Consider $\mu_0 \sim \cN(m_0, \Sigma_0)$, $\mu_1 \sim \cN(m_1, \Sigma_1)$ with $m_0, m_1\in \R^d$, $\Sigma_0,\Sigma_1 \in \R^{d\times d}$. Then,
\begin{equation*}
\begin{aligned}
    \kl{\mu_0}{\mu_1} & = \dfrac{1}{2} \log\left( \dfrac{\det \Sigma_1}{\det \Sigma_0} \right)+ \dfrac{1}{2}({\rm tr}(\Sigma_1^{-1}\Sigma_0) - d) +\dfrac{1}{2}(m_0 - m_1)^\top \Sigma_1^{-1} (m_0 -m_1)\,.
    \end{aligned}
\end{equation*}
\end{lemma}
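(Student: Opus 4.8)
The plan is to compute the KL divergence directly from its definition as an expectation of the log-density ratio under $\mu_0$. First I would write $\kl{\mu_0}{\mu_1} = \int \log\bigl(\mu_0(x)/\mu_1(x)\bigr)\,\mu_0(\rd x)$, substitute the explicit Gaussian densities, and take the logarithm of the ratio. The normalizing constants contribute the term $\tfrac{1}{2}\log(\det\Sigma_1/\det\Sigma_0)$, while the exponents contribute $-\tfrac{1}{2}(x-m_0)^\top\Sigma_0^{-1}(x-m_0) + \tfrac{1}{2}(x-m_1)^\top\Sigma_1^{-1}(x-m_1)$. The log-det term is constant in $x$ and survives the expectation unchanged.

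Next I would evaluate the expectation of each quadratic form using the moment identities $\EE_{X\sim\mu_0}[X]=m_0$ and $\EE_{X\sim\mu_0}[(X-m_0)(X-m_0)^\top]=\Sigma_0$. For the first quadratic form, the key step is the trace identity
\[
\EE_{\mu_0}\bigl[(X-m_0)^\top\Sigma_0^{-1}(X-m_0)\bigr]
= \tr\bigl(\Sigma_0^{-1}\,\EE_{\mu_0}[(X-m_0)(X-m_0)^\top]\bigr)
= \tr(\Sigma_0^{-1}\Sigma_0) = d,
\]
so this term contributes $-d/2$. For the second quadratic form I would write $X-m_1 = (X-m_0)+(m_0-m_1)$ and expand; the cross term $2\,(m_0-m_1)^\top\Sigma_1^{-1}\,\EE_{\mu_0}[X-m_0]$ vanishes because $\EE_{\mu_0}[X-m_0]=0$, leaving
\[
\EE_{\mu_0}\bigl[(X-m_1)^\top\Sigma_1^{-1}(X-m_1)\bigr]
= \tr(\Sigma_1^{-1}\Sigma_0) + (m_0-m_1)^\top\Sigma_1^{-1}(m_0-m_1).
\]

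Finally I would assemble the three contributions, which yields exactly
\[
\kl{\mu_0}{\mu_1} = \frac{1}{2}\log\!\left(\frac{\det\Sigma_1}{\det\Sigma_0}\right) + \frac{1}{2}\bigl(\tr(\Sigma_1^{-1}\Sigma_0) - d\bigr) + \frac{1}{2}(m_0-m_1)^\top\Sigma_1^{-1}(m_0-m_1).
\]
There is no genuine obstacle here, as the result is a standard closed-form computation; the only points requiring care are the correct application of the trace identity to convert the expected quadratic forms into traces, and verifying that the cross term vanishes by mean-centering. I would also note implicitly that both $\Sigma_0$ and $\Sigma_1$ are assumed positive definite so that the densities and inverses are well defined.
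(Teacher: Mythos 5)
Your proof is correct and is exactly the ``direct computation'' the paper invokes without writing out: the paper states this lemma with no proof beyond noting it follows from a direct calculation, and your argument---expanding the log-density ratio, applying the trace identity $\EE_{\mu_0}[(X-m)^\top A (X-m)] = \tr(A\,\COV) + \text{(mean term)}$, and mean-centering to kill the cross term---is the standard and intended route. No gaps; your closing remark that $\Sigma_0,\Sigma_1$ must be positive definite is a detail the paper's statement indeed leaves implicit.
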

Now, we are ready to prove \cref{prop:rovi-kl-gaussian-mixture}.
\begin{proof}[Proof of \cref{prop:rovi-kl-gaussian-mixture}]\label{proof:prop:rovi-kl-gaussian-mixture}
Let $\Delta := \|m_1-m_0\|_2$. 
First, choose $O\in\cO(d)$ such that $O (m_1 - m_0) = \Delta e_1$ where $e_1 =(1,0,\ldots, 0)^\top$. Due to the shift invariance of the KL divergence, we may assume without loss of generality that the rotated target measure $\pi_O = O_\sharp \pi$ becomes 
$$w\cN([-\Delta,0,\ldots,0]^\top, \Sigma_O) +(1-w)\cN([\Delta ,0,\ldots, 0]^\top, \Sigma_O),\quad \Sigma_O := O\Sigma O^\top.$$

Let $D = \left({\rm diag}\left(\Sigma_O^{-1} \right) \right)^{-1} =\left({\rm diag} \left(O \left(\Sigma^{-1} \right) O^\top \right) \right)^{-1} $. We define $\mu_1 = w \cN \left(-\Delta, D_{11} \right) + (1-w)\cN \left(\Delta, D_{11} \right)$ and $\mu_i = \cN(0, D_{ii})$ for $i \geq 2$. Then 
\begin{equation*}
    \mu_O = \otimes_{i=1}^d \mu_i = w\cN([-\Delta,0,\ldots, 0]^\top, D) + (1-w) \cN([\Delta,0,\ldots, 0]^\top, D).
\end{equation*}
By \cref{lemma:kl-mixture}, \cref{lemma:kl-Gaussian} and $\tr(\Sigma_O^{-1}D) = \tr(\Sigma_O^{-1}({\rm diag}(\Sigma_O^{-1}))^{-1}) = d$, we know that 
\begin{equation*}
\begin{aligned}
\kl{\mu}{\pi_O} &\leq  \dfrac{1}{2} \log \left( \dfrac{\det \Sigma_O}{\det D} \right)+ \dfrac{1}{2}({\rm tr}(\Sigma_O^{-1}D) - d)\\
& = \frac{1}{2}  \log (\det (\Sigma)) +  \frac{1}{2}\sum_{i = 1}^d \log \left(\left(\Sigma_O^{-1}\right)_{ii} \right).
\end{aligned}
\end{equation*}
Let $v_1 := \frac{m_1 - m_0}{\|m_1 - m_0 \|_2}$ if $m_1 \neq m_2$ and $0$ otherwise.  Consider the matrix
\begin{equation*}
M := \left(I - v_1 v_1^\top \right) \Sigma^{-1} \left(I - v_1 v_1^\top \right). 
\end{equation*}
The matrix $M$ has rank $d-1$ since $M v_1 = 0$. Let $v_2,\ldots,v_d$ be eigenvectors of $M$ (together with $v_1$ they form an orthonormal basis). Choose an rotation matrix $O$ such that $Oe_1=v_1$ and $Oe_i=v_i$ for all $i\ge 2$. Then
\begin{equation}\label{eq:kl-rovi-bound}
\begin{aligned}
\kl{\mu}{\pi_O}&\leq  \frac{1}{2} \log (\det (\Sigma)) +  \frac{1}{2}\sum_{i = 1}^d \log \left((O^\top e_i)^\top \Sigma^{-1} (O^\top e_i)\right) \\
&=  \frac{1}{2} \left( \log (\det (\Sigma)) + \sum_{i = 1}^d \log \left(v_i^\top \Sigma^{-1} v_i \right)\right). 
\end{aligned}
\end{equation}
If $v_1$ is an eigenvector of $\Sigma$, write the eigendecomposition $\Sigma=V\Lambda V^\top$ where $V=[u_1,\ldots,u_d] \in \cO(d)$ with $u_1 = v_1$. Then
\begin{align*}
M & = \bigl(I - v_1 v_1^\top\bigr) V\Lambda^{-1}V^\top \bigl(I - v_1 v_1^\top\bigr) \\
& =\left[ 0, V_{-1} \right] \Lambda^{-1} \left[0, V_{-1} \right]^\top \\
& = V_{-1} \Lambda_{-1}^{-1} V_{-1}^\top,
\end{align*}
where $V_{-1}=[u_2,\ldots,u_d]$ and $\Lambda_{-1}=\mathrm{diag}(\lambda_2,\ldots,\lambda_d)$. Thus $u_2, \cdots, u_d$ are also eigenvectors of $M$. In particular, we may choose $v_i = u_i$, $i\geq 2$ in \eqref{eq:kl-rovi-bound}, yielding that
\begin{equation*}
    \kl{\mu}{\pi_O} \leq  \frac{1}{2}  \log (\det (\Sigma)) - \sum_{i = 1}^d \frac{1}{2} \log \lambda_i  = 0,
\end{equation*}
where $\lambda_1,\ldots, \lambda_d$ are the eigenvalues of $\Sigma$. Moreover, we may choose $O = V$.

\end{proof}

\newpage
\section{Additional Simulation Results}
See \Cref{fig:MFVI-mode-collapse-RoVI-works} for the additional results on the performance of RoVI on learning a symmetric Gaussian mixture, compared to LMC and MFVI.
\begin{figure}[h]
\centering
\includegraphics[width=\linewidth]{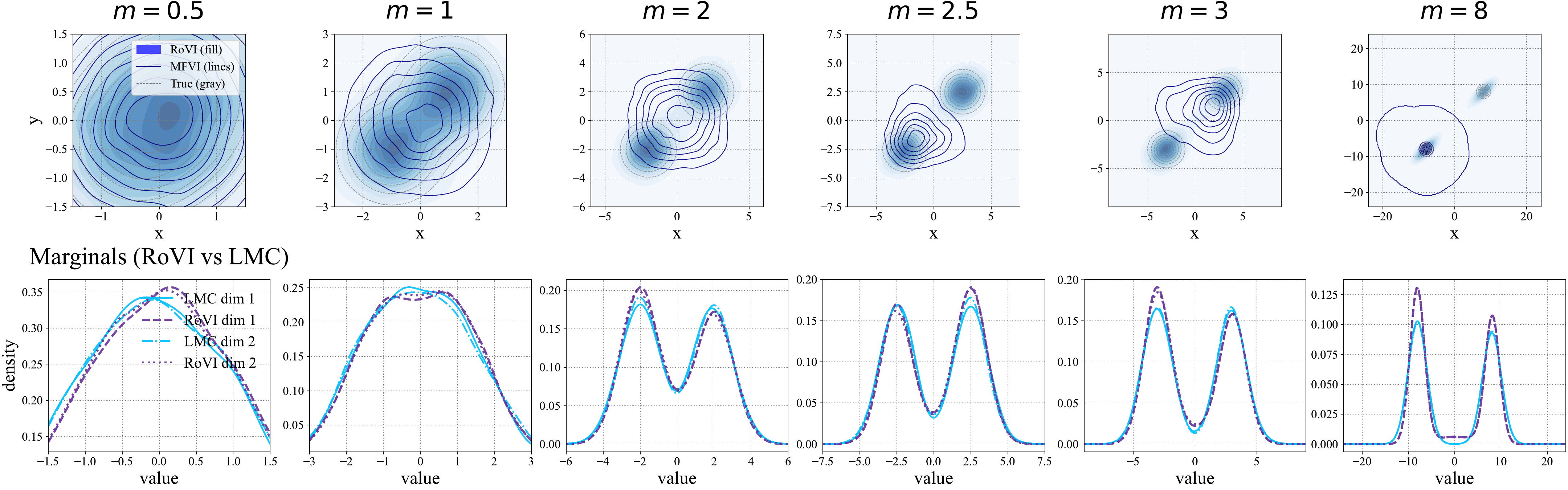}
\caption{
  The target is a symmetric Gaussian mixture:
  $ \frac{1}{2}\,\cN\left([-m,-m]^\top, I_2
\right) + \frac{1}{2}\,\cN\left([m,m]^\top, I_2
\right)$.
  RoVI correctly recovers both components and the marginal distributions while MFVI collapses to a single mode.
}
\label{fig:MFVI-mode-collapse-RoVI-works}
\end{figure}
\end{document}